\documentclass{article}

\usepackage[nonatbib, preprint]{neurips_2026}

\usepackage[numbers]{natbib}

\usepackage[utf8]{inputenc} % allow utf-8 input
\usepackage[T1]{fontenc}    % use 8-bit T1 fonts
\usepackage{hyperref}       % hyperlinks
\usepackage{url}            % simple URL typesetting
\usepackage{booktabs}       % professional-quality tables
\usepackage{amsfonts}       % blackboard math symbols
\usepackage{nicefrac}       % compact symbols for 1/2, etc.
\usepackage{microtype}      % microtypography
\usepackage{xcolor}         % colors
\usepackage{amsmath}
\usepackage{bbold}
\usepackage{wrapfig}
\usepackage{booktabs}
\usepackage{graphicx}
\usepackage{multirow}
\usepackage{adjustbox}

\usepackage{amsthm}
\usepackage{mathrsfs}  
\usepackage{thmtools, thm-restate}

\newtheorem{proposition}{Proposition}

\title{Proper Scoring Rules for Right-Censored Survival Data}

\author{Jef Jonkers\\
	IDLab \\
        Department of Electronics \\ and Information Systems \\
        Ghent University, Belgium \\
	\texttt{jef.jonkers@ugent.be} \\
	\And
	Glenn Van Wallendael\\
	IDLab \\
        Department of Electronics \\ and Information Systems \\
        Ghent University - imec, Belgium \\
	\And
    Luc Duchateau\\
	Biometrics Research Group \\ 
        Department of Morphology,\\ Imaging, Orthopedics, \\
        Rehabilitation and Nutrition \\
        Ghent University, Belgium \\
	\AND
	Sofie Van Hoecke\\
	IDLab \\
        Department of Electronics \\ and Information Systems \\
        Ghent University - imec, Belgium \\
}

\begin{document}

\maketitle

\begin{abstract}
   Proper scoring rules provide a rigorous theoretical basis for the training and evaluation of probabilistic forecasts. However, in the presence of right censoring, the event time is only partially observed, rendering conventional scoring rules inapplicable in their standard form. We propose a framework for proper scoring of right-censored survival outcomes based on a simple idea: first, map the predictive distribution through the censoring mechanism, then apply the underlying proper score on the induced observed-data law. This yields localized scores for fixed censoring times and marginalized scores when the censoring time is random or only partially observed. The resulting construction recovers familiar right-censored likelihood and IPCW-type criteria within a coherent framework, while also yielding right-censored versions of the CRPS, pinball loss, Brier score, and energy score. We show that the marginalized score is proper under conditional independent censoring and strictly proper on the identifiable region. The same principle also leads to censored engression, a sample-based learning objective for multivariate right-censored survival modeling. In experiments, our scores correctly rank the oracle forecast across several censoring regimes, whereas forecast-dependent plug-in weighted scores can exhibit ranking reversals. Censored engression likewise substantially improves over naive training on censored outcomes.
\end{abstract}

\section{Introduction}

Predicting time-to-event outcomes is central in many applications, including healthcare, reliability, customer churn, and risk modeling. In these settings, the goal is often not merely to estimate the mean survival time or a hazard, but to forecast the full conditional event-time distribution $F_X$ given covariates $X$. Proper scoring rules \citep{gneiting_strictly_2007} provide principled criteria for training and evaluating probabilistic forecasts, and they ensure that the true predictive distribution is optimal in expectation.

The main difficulty in survival prediction is that event times are only partially observed.  Under right censoring, one observes $Y=\min(T,C)$ and  $\Delta=\mathbb 1\{T\le C\}$, rather than the latent event time $T$ itself. As a result, standard proper scoring rules for uncensored outcomes cannot be applied directly. This raises a basic question: how should one define a proper scoring rule when the event time is coarsened by right censoring?

Our answer is to score forecasts after applying the same coarsening operation that generated the data. Given a predictive distribution $F$ for $T$ and a censoring time $c$, we evaluate the induced distribution of $(\min(T,c),\mathbb 1\{T\le c\})$ under $T\sim F$, rather than evaluating $F$ against the unobserved $T$. Thus, the score is determined by the observation mechanism: the censoring map is fixed independently of the forecast being evaluated. For fixed $c$, this gives a localized proper score in the sense of \citet{de_punder_localizing_2026}. For random censoring, the realized censoring time may be only partially observed; for example, if an event is observed, one only knows that $C\ge Y$. We therefore average the localized score over the conditional law of $C$ given $(Y,\Delta,X)$ under the censoring mechanism. The resulting marginalized score remains forecast-independent once the censoring law is fixed, or estimated separately from the forecast being evaluated, in contrast to plug-in constructions \citep{yanagisawa_proper_2023} where censoring weights or tail probabilities are recomputed from the forecast itself.

Beyond evaluation, the same principle naturally leads to a training objective for flexible survival models. We extend engression \citep{shen_engression_2025}, an implicit distributional regression method based on the energy score, to right-censored multivariate survival data. The resulting \emph{censored engression} method trains a noise-conditioned generator by comparing censored generated samples with observed censored outcomes, using the same localized or marginalized right-censored energy score. This makes it possible to learn flexible joint event-time distributions under censoring without requiring a tractable likelihood.

\paragraph{Contributions.} First, we present a unified construction of proper scoring rules under right censoring by censoring the predictive distribution prior to scoring. Second, we show that the resulting marginalized score is proper under conditional independent censoring and strictly proper on the identifiable region. Third, we derive concrete right-censored versions of the logarithmic score, CRPS, pinball loss, Brier score, and energy score, clarifying their relationship to existing survival criteria (see Table~\ref{tab:right-censoring-scores}). Fourth, we use the same principle to define censored engression for multivariate right-censored survival modeling. Finally, in experiments, we separate three empirical questions: in controlled settings with known data-generating processes, we show that our forecast-independent scores correctly rank the oracle forecast across censoring regimes, whereas forecast-dependent plug-in weighted scores can exhibit ranking reversals; in synthetic multivariate learning experiments, we demonstrate that censored engression outperforms naive training and exhibits competitive performance relative to likelihood-based Weibull, copula, and low-dimensional grid benchmarks trained using censored log scores. Furthermore, in an intensive care unit (ICU) acute kidney injury (AKI) case study, we show that censored engression can flexibly model joint time-to-event distributions for clinically relevant endpoints, including creatinine-based and urine-output-based AKI.

\section{Related work}
Our proper scoring rule framework under censoring connects and clarifies several strands of prior work. For right-censored survival evaluation, \citet{graf_assessment_1999} introduced the inverse probability of censoring weighting (IPCW) Brier score and its integrated version, and \citet{gerds_consistent_2006} showed, implicitly, that, with a known or correctly specified censoring distribution, these criteria target the proper population objectives in regions where censoring positivity holds. In particular, administrative censoring is not problematic per se; the difficulty, as emphasized by \citet{kvamme_brier_2023}, arises from a positivity issue when evaluation extends beyond the identifiable follow-up region. Likewise, the non-propriety results in \citet{rindt_survival_2022} should be understood as applying to IPCW implementations using marginal, rather than covariate-conditional, censoring weights; under correct conditional censoring weights, the pathology disappears. The same work also shows that the usual right-censored negative log-likelihood is proper, and that our construction recovers this score as a special case. Beyond likelihood and IPCW-type criteria, \citet{avati_countdown_2020} proposed an adapted continuous ranked probability score (CRPS) for censored outcomes, but \citet{rindt_survival_2022} gave counterexamples showing that this score is not proper. More recently, \citet{yanagisawa_proper_2023} derived proper extensions of discrete pinball, Brier, and ranked probability scores under censoring, but their weights depend on unknown functionals of the latent event-time law. When such weights are replaced by plug-in estimates from the forecast itself, the resulting score becomes forecast-dependent. Our work instead keeps the weighting forecast-independent by basing it solely on the censoring mechanism.

\section{Proper scoring under censoring}
\label{sec:cens-proper-scoring}
We propose a general framework for proper scoring rules for right-censored survival outcomes that recovers, up to a forecast-independent positive factor, previously proposed proper scoring rules, such as the censored logarithmic score \citep{rindt_survival_2022} and the IPCW Brier score \citep{graf_assessment_1999, gerds_consistent_2006}. Our starting point is a strictly proper scoring rule on the latent outcome space. Following \citet{de_punder_localizing_2026}, we view censoring as a transformation of the predictive distribution itself: rather than scoring the latent law directly, we first map it to the censored outcome space and then apply the corresponding proper score to the induced observed-data law. This preserves the original score divergence after censoring and yields a score that is locally proper for the latent law on the part of the outcome space that remains identifiable under the censoring mechanism. 

Let $X\in\mathcal X\subseteq\mathbb R^d$ denote covariates, and let $T\in\mathcal T\subseteq\mathbb R_+^k$ denote the latent event-time outcome, with $k=1$ in the univariate case and $k\ge2$ in the multivariate setting. For each covariate value $x$, we seek a predictive distribution $F=\mathcal L(T\mid X=x)$. Under right censoring, however, we observe only $Y=\min(T,C)$ and $\Delta=\mathbb 1\{T\le C\}$.

Equip $\mathcal T$ with a $\sigma$-algebra $\mathscr T$, let $\mathcal P$ be a class of probability measures on $(\mathcal T,\mathscr T)$, and let $S:\mathcal P\times\mathcal T\to\bar{\mathbb R}$ be a scoring rule with divergence
\begin{equation*}
    D_S(P\|F)=\mathbb E_P[S(F,T)]-\mathbb E_P[S(P,T)].
\end{equation*}
We call $S$ strictly proper if $D_S(P\|F)\ge0$ for all $P,F\in\mathcal P$, with equality only when $P=F$. This is the uncensored benchmark to which we compare the censored scores constructed below.

\subsection{Localized proper scoring rules under fixed censoring}
\label{sec:fixed-cens-score}
We begin with a fixed censoring configuration. Let $A\subseteq \mathcal T$ denote the part of the outcome space that remains identifiable under this censoring mechanism, and let $*$ denote an abstract censored outcome. The corresponding censored outcome space is $\mathcal T_A^{\flat}=A\cup\{*\}$. For a latent predictive law $F\in\mathcal P$, define its censored version by
\begin{equation*}
    F_A^{\flat}
    =
    F_A+\bar F_A\,\delta_*,
    \qquad
    F_A(B):=F(B\cap A),
    \qquad
    \bar F_A:=F(A^c).
\end{equation*}
Thus, the law is left unchanged on the identifiable region $A$, while all mass outside $A$ is collapsed into the censored outcome $*$.

The corresponding localized score is obtained by applying the proper score on the censored outcome space,
\begin{equation*}
    S_A^{\flat}(F,t) := S(F_A^{\flat},t_A^{\flat}), \qquad \text{where} \quad t_A^{\flat}
    =
    \begin{cases}
    t, & t\in A,\\
    *, & t\in A^c.
    \end{cases}
\end{equation*}

Intuitively, following the localization perspective of \citet{de_punder_localizing_2026}, the forecast is first censored in the same way as the observation, and the original proper score is then evaluated on the resulting censored outcome space. By construction, the induced divergence is the original divergence applied to the censored laws, $D_{S_A^{\flat}}(P\|F) = D_S(P_A^{\flat}\|F_A^{\flat})$, so equality of population score implies equality of the induced censored distributions, not necessarily of the full latent laws. In this sense, the score is strict only to the extent that the latent law remains identifiable under the fixed censoring mechanism.

For fixed right censoring at time $c$, the identifiable region depends on the boundary convention. With the convention $\Delta=\mathbb 1\{T\le c\}$, the identifiable region is $A_c=[0,c]$, and the censored outcome map is
\begin{equation*}
     \psi^\flat_c(t)
    =
    \begin{cases}
    t, & t\le c,\\
    *, & t>c.
    \end{cases}
\end{equation*}
The corresponding censored law is $F_c^{\flat} = F(\cdot\cap[0,c]) + F((c,\infty))\delta_*$. Thus, an event at the boundary $c$ is distinguished from a censored observation beyond $c$. We write $S_c^\flat(F;Y,\Delta)$ for the localized score induced by this fixed censoring time.

When $T\mid X$ is continuous, boundary events have probability zero, so the convention at $T=c$ does not affect expected scores.  For distance-based scores such as the CRPS and energy score, however, the abstract censored state $*$ cannot be inserted directly into Euclidean distances. By a slight abuse of notation, we keep the notation $\psi_c^\flat$ for this numerical encoding of the same coarsened observation:
\begin{equation*}
    \psi^\flat_c(t)=\min(t,c),
    \qquad
    F_c^\flat=\mathcal L(\psi^\flat_c(T)).
\end{equation*}
This representation should be read as an encoding of the coarsened observation: the value $c$ represents the censored region $(c,\infty)$, not an event known to occur at $c$. When the event indicator is needed, we keep the convention $\Delta=\mathbb 1\{T\le c\}$, so that $(c,0)$ denotes censoring beyond $c$ whereas $(c,1)$ denotes an event observed at the boundary.

In the multivariate setting, with $T=(T_1,\dots,T_k)$ and componentwise censoring thresholds $c=(c_1,\dots,c_k)$, we use
\begin{equation*}
    \psi^\flat_c(t)
    =
    (\min(t_1,c_1),\dots,\min(t_k,c_k)),
    \qquad
    F_c^\flat=\mathcal L(\psi^\flat_c(T)).
\end{equation*}
The score is then applied to the joint law of the censored vector, rather than to separate marginal laws, so dependence that remains visible after censoring is retained. For a shared censoring time, this corresponds to the special case $c_1=\cdots=c_k=c$.

\subsection{Marginalized proper scoring rules}

We now turn to random right censoring. Suppose we observe $Y=\min(T,C)$ and $\Delta=\mathbb 1\{T\le C\}$, and let $H(t\mid x)=\mathbb P(C\le t\mid X=x)$ denote the conditional censoring distribution function.

When $C$ is random, the censoring time is not always fully determined by the observed data. Let
\[
    \Pi(dc\mid Y,\Delta,X):=\mathcal L(C\in dc\mid Y,\Delta,X)
\]
be the conditional law of the censoring time given the observed censored outcome, induced by the true censoring distribution $H(\cdot\mid X)$ and the right-censoring mechanism. Under conditional independent censoring, $T\perp C\mid X$, the observed-data score is obtained by averaging the fixed-$c$ localized score over this conditional law:
\[
    \bar S(F;Y,\Delta,X)
    :=
    \int S_c^\flat(F;Y,\Delta)\,\Pi(dc\mid Y,\Delta,X).
\]
In the univariate case, this reduces to
\[
    \bar S(F;Y,\Delta,X)
    =
    \begin{cases}
    \displaystyle
    \int_{[Y,\infty)} S_c^\flat(F;Y,1)\,
    H(dc\mid C\ge Y,X),
    & \Delta=1,\\[2mm]
    S_Y^\flat(F;Y,0),
    & \Delta=0.
    \end{cases}
\]
For a shared censoring time in the multivariate case, $C$ is observed exactly whenever at least one coordinate is censored. If all coordinates are observed, one only knows that $C\ge \max_j Y_j$, so $\Pi$ is the law of $C$ conditional on $C\ge \max_j Y_j$.

For fixed $x$, let $\bar{\mathcal S}_x(F) := \mathbb E_P[\bar S(F;Y,\Delta,X)\mid X=x]$ denote the conditional expected score. For a shared scalar censoring time, the identifiable region is
\begin{equation*}
    \mathcal I_x^{\mathrm{shared}}
    =
    \left\{
    t\in\mathbb R_+^k:
    \mathbb P\!\left(C\ge \max_{j=1,\ldots,k} t_j \mid X=x\right)>0
    \right\}.
\end{equation*}
For componentwise censoring times $C=(C_1,\ldots,C_k)$, the analogous region is
\begin{equation*}
    \mathcal I_x^{\mathrm{comp}}
    =
    \left\{
    t\in\mathbb R_+^k:
    \mathbb P\!\left(C_j\ge t_j \ \text{for all } j=1,\ldots,k \mid X=x\right)>0
    \right\}.
\end{equation*}
We write $\mathcal I_x$ for the relevant region under the censoring mechanism considered. In the univariate case, both definitions reduce to $\mathcal I_x=\{t\ge0:\mathbb P(C\ge t\mid X=x)>0\}$. The next proposition states that the marginalized score is minimized by the true latent law, with uniqueness only on $\mathcal I_x$.

\begin{proposition}[Marginalized propriety under right censoring]
    \label{prop:marginalized-properness}
    Assume conditional independent censoring, $T\perp C\mid X$, and let $H(\cdot\mid X)$ denote the true conditional distribution of the censoring time. Suppose that the conditional law of $C$ given $(Y,\Delta,X)$ used in the marginalized score is the one induced by $H(\cdot\mid X)$ and the right-censoring mechanism. If, for $H(\cdot\mid X=x)$-almost every fixed censoring time $c$, the localized score $S_c^\flat$ is proper for the induced censored law, then for every $x$,
    \[
        \bar{\mathcal S}_x(F)\ge \bar{\mathcal S}_x(P)
    \]
    for all forecasts $F$.

    If, in addition, the localized scores are strictly proper and the collection of induced censored laws for censoring times with positive probability identifies the latent law on $\mathcal I_x$, then
    \[
        \bar{\mathcal S}_x(F)=\bar{\mathcal S}_x(P)
        \quad\Longleftrightarrow\quad
        F_x=P_x \text{ on } \mathcal I_x.
    \]
    Hence, the marginalized score is proper as an observed-data score for forecasts of the latent law, and strictly proper only on the identifiable region.
\end{proposition}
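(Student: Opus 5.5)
The plan is to rewrite the conditional expected marginalized score as an average over $c$ of fixed-censoring expected localized scores, and then use the propriety of each $S_c^\flat$ pointwise in $c$.

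\textbf{Step 1: disintegration.} Fix $x$ and write $P=P_x$ for the true law of $T$ given $X=x$. By the tower property and the definition of $\Pi$ as the conditional law of $C$ given $(Y,\Delta,X)$,
\[
\bar{\mathcal S}_x(F)=\mathbb E\bigl[\,\mathbb E[S_C^\flat(F;Y,\Delta)\mid Y,\Delta,X]\mid X=x\bigr]=\mathbb E\bigl[S_C^\flat(F;\psi_C^\flat(T))\mid X=x\bigr].
\]
This uses the fact that $S_c^\flat(F;Y,\Delta)$ is a measurable function of $(c,Y,\Delta)$. On the event $\{C=c\}$, the pair $(Y,\Delta)$ is exactly the fixed-$c$ coarsening $\psi_c^\flat(T)$.

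Conditional independence $T\perp C\mid X$ then lets us integrate $T$ first at fixed $c$:
\[
\bar{\mathcal S}_x(F)=\int \mathbb E_{T\sim P}\bigl[S_c^\flat(F;\psi_c^\flat(T))\bigr]\,H(dc\mid x)=\int \mathcal S_c(F)\,H(dc\mid x),
\]
where $\mathcal S_c(F)=\mathbb E_{P_c^\flat}[S(F_c^\flat,\cdot)]$. For a shared or componentwise censoring vector, the same computation applies with $H$ replaced by the joint law of $C$ given $X=x$.

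The crucial point is that $\Pi$ is induced by the true $H$. If a misspecified $\Pi$ were used, the iterated expectation would not collapse to the joint law of $(T,C)$. I expect this step to be the main technical obstacle: one must justify Fubini (quasi-integrability of the scores, possibly with values in $\bar{\mathbb R}$) and the measurable choice of $\Pi$. I would assume the scores are bounded below or $P$-quasi-integrable.

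\textbf{Step 2: propriety.} By Step 1,
\[
\bar{\mathcal S}_x(F)-\bar{\mathcal S}_x(P)=\int D_S(P_c^\flat\|F_c^\flat)\,H(dc\mid x).
\]
The integrand is nonnegative for $H$-a.e.\ $c$ by the localized propriety assumption, since $D_{S_c^\flat}(P\|F)=D_S(P_c^\flat\|F_c^\flat)$. Hence the difference is nonnegative.

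\textbf{Step 3: strictness on $\mathcal I_x$.}

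($\Leftarrow$) If $F=P$ on $\mathcal I_x$, then for $H$-a.e.\ $c$ the censored laws coincide. The identifiable region $A_c$ is contained in $\mathcal I_x$ up to $H$-null sets: if $c$ lies in the support of $H$, then $\mathbb P(C\ge t)\ge \mathbb P(C\ge c)>0$ for $t\le c$. The mass collapsed into $*$ is then $1-F(A_c)=1-P(A_c)$. So every divergence vanishes and the expected scores are equal.

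($\Rightarrow$) Equality makes the integral of a nonnegative function zero. Hence $D_S(P_c^\flat\|F_c^\flat)=0$ for $H$-a.e.\ $c$, and strict propriety gives $F_c^\flat=P_c^\flat$ for those $c$. The identifiability hypothesis then yields $F=P$ on $\mathcal I_x$.

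To make that hypothesis concrete in the univariate case, note that $F$ and $P$ agree on $[0,c]$ for a set of $c$ of full $H$-measure. Such $c$ can be taken arbitrarily close to $\sup\mathcal I_x$, and at an atom they can attain it. Since $\mathcal I_x=\bigcup [0,c]$ over these $c$, up to the boundary, this gives agreement on $\mathcal I_x$.
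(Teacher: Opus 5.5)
Your Steps 1--2 and the $(\Rightarrow)$ half of Step 3 follow the paper's proof exactly. The tower property removes $\Pi$, conditional independence turns the score gap into $\int D_{S}(P_{c,x}^\flat\|F_{c,x}^\flat)\,H(dc\mid x)$, and a vanishing integral forces $F_{c,x}^\flat=P_{c,x}^\flat$ for $H$-a.e.\ $c$, after which the identifiability hypothesis is invoked. Your caveat about quasi-integrability and Fubini is a fair addition that the paper omits.

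The problem is the one step where you go beyond the paper: the direct proof of $(\Leftarrow)$. It assumes the censored law is $F_{A_c}+(1-F(A_c))\,\delta_*$, so that it is determined by the restriction of $F$ to $A_c\subseteq\mathcal I_x$. That holds for $k=1$, but the proposition also covers $k\ge 2$. There the censored law is $\mathcal L(\psi_c^\flat(Z))$ with componentwise minima. Partially censored outcomes then have probabilities such as $F(\{t:\ t_1\in B,\ t_2\ge c\})$, which depend on $F$ outside $\mathcal I_x$.

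Concretely, take $k=2$ and $C\equiv 1$, so $\mathcal I_x^{\mathrm{shared}}=[0,1]^2$. Let $P=\tfrac12\delta_{(0.5,2)}+\tfrac12\delta_{(2,0.5)}$ and $F=\delta_{(0.5,2)}$. Both put no mass on $[0,1]^2$, so they agree on $\mathcal I_x$. Yet $P_1^\flat=\tfrac12\delta_{(0.5,1)}+\tfrac12\delta_{(1,0.5)}\neq\delta_{(0.5,1)}=F_1^\flat$, so the strictly proper censored energy score gives $\bar{\mathcal S}_x(F)>\bar{\mathcal S}_x(P)$.

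Hence $(\Leftarrow)$ cannot be derived from agreement of restrictions to $\mathcal I_x$. The paper obtains it only by reading the identifiability hypothesis as a two-sided equivalence (``this is equivalent to $F_x=P_x$ on $\mathcal I_x$''). You should do the same, and keep your computation only as a check of that hypothesis in the univariate case.

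A smaller slip: $c\in\operatorname{supp}H$ does not imply $\mathbb P(C\ge c)>0$; take the upper endpoint of a non-atomic support. What holds, and what suffices for $k=1$, is that $\{c:\ \mathbb P(C\ge c\mid X=x)=0\}$ is $H(\cdot\mid x)$-null.
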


Outside $\mathcal I_x$, two latent distributions can differ while inducing the same observed censored law. Hence, no scoring rule based only on $(Y,\Delta,X)$ can be strictly proper for those non-identifiable tail differences.

In the univariate continuous case, if the base score is the logarithmic score and the forecast $F$ has density $f$, then the marginalized score recovers the usual right-censored negative log-likelihood, see Section~\ref{app:log-score-censoring} and Table~\ref{tab:right-censoring-scores}. For the logarithmic score, marginalization simplifies because the fixed-$c$ localized contribution is either observed exactly or constant over the compatible censoring values.

\begin{table*}[t]
\centering
\small
\setlength{\tabcolsep}{4pt}
\renewcommand{\arraystretch}{1.2}
\caption{Examples of uncensored, fixed-censoring localized, and marginalized scores under right censoring. All scores are written as losses, so lower values are better. We write
$G(t\mid x)=\mathbb P(C>t\mid X=x)$ and $G(t^-\mid x)=\mathbb P(C\ge t\mid X=x)$.}
\label{tab:right-censoring-scores}
\begin{adjustbox}{max width=\textwidth}
\begin{tabular}{p{2.3cm} p{11.8cm}}
\toprule
Type & Score \\
\midrule

\multicolumn{2}{l}{\textbf{Logarithmic score (log)}}\\
Uncensored &
$S_{\log}(F;t,x)=-\log f(t\mid x)$ \\[1mm]
Fixed local &
$S_{\log,c}^{\flat}(F;y,\delta,x)=-\delta\log f(y\mid x)-(1-\delta)\log (1 - F(c\mid x))$ \\[1mm]
Marginalized &
$\bar S_{\log}(F;y,\delta,x)=-\delta\log f(y\mid x)-(1-\delta)\log (1-F(y\mid x))$ \\
\addlinespace[1mm]

\multicolumn{2}{l}{\textbf{Continuous ranked probability score (CRPS)}}\\
Uncensored &
$S_{\mathrm{CRPS}}(F;t,x)=\int_0^t F(s\mid x)^2\,ds+\int_t^\infty (1-F(s\mid x))^2\,ds$ \\[1mm]
Fixed local &
$S_{\mathrm{CRPS},c}^{\flat}(F;y,\delta,x)=\int_0^y F(s\mid x)^2\,ds+\delta\int_y^c (1-F(s\mid x))^2\,ds$ \\[1mm]
Marginalized &
$\bar S_{\mathrm{CRPS}}(F;y,\delta,x)=\int_0^y F(s\mid x)^2\,ds+\delta\int_y^\infty \frac{G(s\mid x)}{G(y^-\mid x)}(1-F(s\mid x))^2\,ds$ \\
\addlinespace[1mm]

\multicolumn{2}{l}{\textbf{Brier score at horizon $\tau$ ($\mathbf{\text{BS}_\tau}$)}}\\
Uncensored &
$S_{\mathrm{BS},\tau}(F;t,x)=\bigl(F(\tau\mid x)-\mathbb 1\{t\le \tau\}\bigr)^2$ \\[1mm]
Fixed local &
$S_{\mathrm{BS},\tau,c}^{\flat}(F;t,x)=\bigl(F_c^{\flat}(\tau\mid x)-\mathbb 1\{\min(t,c)\le \tau\}\bigr)^2$ \\[1mm]
Marginalized &
$\bar S_{\mathrm{BS},\tau}(F;Y,\Delta,X)=\mathbb 1\{Y>\tau\}F(\tau\mid X)^2+\Delta\mathbb 1\{Y\le \tau\}\frac{G(\tau\mid X)}{G(Y^-\mid X)}(1-F(\tau\mid X))^2$ \\
\addlinespace[1mm]

\multicolumn{2}{l}{\textbf{Pinball / quantile score at level $\alpha$ ($\mathbf{\text{Q}_\alpha}$)}}\\
Uncensored &
$S_{\mathrm{Q},\alpha}(F;t,x)=\bigl(\alpha-\mathbb 1\{t<q_\alpha(F\mid x)\}\bigr)\bigl(t-q_\alpha(F\mid x)\bigr)$ \\[1mm]
Fixed local &
$S_{\mathrm{Q},\alpha,c}^{\flat}(F;t,x)=\bigl(\alpha-\mathbb 1\{\min(t,c)<q_\alpha(F_c^{\flat}\mid x)\}\bigr)\bigl(\min(t,c)-q_\alpha(F_c^{\flat}\mid x)\bigr)$ \\[1mm]
Marginalized &
\(\begin{aligned}
\bar S_{\mathrm{Q},\alpha}(F;Y,\Delta,X)
&= \alpha(Y-q_\alpha(F\mid X))_+ \\
&\quad + \Delta\frac{1-\alpha}{G(Y^-\mid X)}
\mathbb 1\{q_\alpha(F\mid X)>Y\}
\int_Y^{q_\alpha(F\mid X)} G(t\mid X)\,dt
\end{aligned}\) \\
\addlinespace[1mm]

\multicolumn{2}{l}{\textbf{Energy score (ES)}}\\
Uncensored &
$S_{\mathrm{ES},\beta}(F;t,x)=\mathbb E[\|Z-t\|^\beta\mid X=x]-\frac12\mathbb E[\|Z-Z'\|^\beta\mid X=x]$ \\[1mm]
Fixed local &
$S_{\mathrm{ES},\beta,c}^{\flat}(F;y^{\mathrm{obs}},x)=\mathbb E[\|\psi_c(Z)-y^{\mathrm{obs}}\|^\beta\mid X=x]-\frac12\mathbb E[\|\psi_c(Z)-\psi_c(Z')\|^\beta\mid X=x]$ \\[1mm]
Marginalized &
$\bar S_{\mathrm{ES},\beta}(F;Y^{\mathrm{obs}},X)=\mathbb E\!\left[S_{\mathrm{ES},\beta,C}^{\flat}(F;Y^{\mathrm{obs}},X)\mid Y^{\mathrm{obs}},X\right]$ \\
\bottomrule
\end{tabular}
\end{adjustbox}
\end{table*}
Table~\ref{tab:right-censoring-scores} summarizes several scores obtained from the localized and marginalized constructions under right censoring. The formulas are written under the continuous no-boundary-ambiguity convention, so that the abstract censored outcome may be represented by the boundary value $c$ through the boundary encoding $\psi_c^\flat(t)=\min(t,c)$. Detailed derivations are given in Appendix~\ref{app:score-derivations}.

\subsection{Special cases: administrative censoring and observed censoring times}
\label{sec:admin-cens}

Administrative censoring is the degenerate case in which the censoring time is fixed by design. Conditional on $X=x$, the censoring configuration is known, say $c(x)$, and the observed-data score reduces to the fixed-censoring localized score $\bar S(F;Y,\Delta,X=x)=S_{c(x)}^\flat(F;Y,\Delta)$.

Equivalently, the censoring law is $H_x(dc)=\delta_{c(x)}(dc)$. The score is strictly proper for the induced censored law, or for the latent law, only on the administratively identifiable region; it cannot distinguish tail differences beyond the follow-up horizon.

A second useful case occurs when the realized censoring time $C$ is observed for every individual, including those who experience the event before censoring. Then, no averaging over compatible censoring times is needed: for each observation, one evaluates $\bar S(F;Y,\Delta,C,X)=S_C^\flat(F;Y,\Delta)$.

Under conditional independent censoring, the resulting expected score is proper on the identifiable region. This setting is relevant when the censoring event, such as discharge or death in an ICU study (Section \ref{sec:aki-use-case}) is recorded even if the clinical event of interest occurs earlier. In such applications, the assumption that this censoring process is conditionally independent of the event time still remains.

\section{Censored engression for multivariate survival}
\label{sec:censored-engression}
We now introduce \emph{censored engression}, a sample-based method for learning flexible, possibly multivariate, event-time distributions from censored observations. The method combines two ideas. First, following engression \citep{shen_engression_2025}, we represent the conditional event-time distribution via an implicit generator $g_\theta(x,\varepsilon)$ that maps covariates and exogenous noise to samples from the predictive distribution. This makes the approach suitable for flexible joint survival modeling, including settings in which the dependence among multiple event times is difficult to parameterize explicitly. Second, following the localization-by-censoring principle of Section ~\ref{sec:fixed-cens-score}, we do not score the latent predictive distribution directly when event times are censored. Instead, we apply the censoring map to both generated samples and observations, and train the generator using a localized energy score, marginalized over compatible censoring configurations when necessary.

Engression \citep{shen_engression_2025} represents the conditional law through a generator
\begin{equation}
    \label{eq:noise-model}
    Y_g(x)=g(x,\varepsilon),
    \qquad
    \varepsilon\sim P_\varepsilon,
\end{equation}
where $P_\varepsilon$ is a simple reference distribution. The generator induces the conditional predictive law $P_g(\cdot\mid x):=\mathcal L(g(x,\varepsilon))$, so unlike classical regression, engression directly models the full conditional distribution via samples. For two independent noise variables $\varepsilon,\varepsilon'$, the energy score population objective is
\begin{equation*}
    \mathcal R_{\mathrm{ES}}(g)
    =
    \mathbb E\left[
        \|Y-g(X,\varepsilon)\|
        -
        \frac12\|g(X,\varepsilon)-g(X,\varepsilon')\|
    \right],
\end{equation*}
where $(X,Y)\sim P$. Under correct specification, minimizing this objective recovers the true conditional law; see Proposition~1 of \citet{shen_engression_2025}.

\subsection{Censored engression}
We now extend engression to multivariate right-censored survival outcomes. Let $T=(T_1,\ldots,T_k)\in\mathbb R_+^k, k \ge 1$, denote the latent event-time vector, and let the generator in \eqref{eq:noise-model} induce the latent predictive law $F_{g,x}:=\mathcal L(g(x,\varepsilon)).$

Under right censoring, the event times are not fully observed. For a fixed censoring time $c$, let $\psi^\flat_c(t)=\bigl(\min(t_1,c),\ldots,\min(t_k,c)\bigr)$ denote the shared censoring map, and let $Y=\psi^\flat_c(T)$ denote the corresponding censored outcome. Following the localization-by-censoring principle, we apply the energy score after censoring, that is, to the induced censored law $\mathcal L(\psi^\flat_c(g(x,\varepsilon)))$ rather than to the latent law $F_{g,x}$ itself. The resulting fixed-$c$ censored energy loss is
\begin{equation}
	\label{eq:local-cens-engression}
    \mathcal R_{\mathrm{local\text{-}cens\text{-}ES}}(g;Y,x,c)
    =
    \mathbb E_{\varepsilon}
    \left[
        \|\psi_c(g(x,\varepsilon))-Y\|
    \right]
    -
    \frac12
    \mathbb E_{\varepsilon,\varepsilon'}
    \left[
        \|\psi_c(g(x,\varepsilon))
        -
        \psi_c(g(x,\varepsilon'))\|
    \right],
\end{equation}
where $\varepsilon,\varepsilon'\stackrel{\mathrm{iid}}{\sim}P_\varepsilon$.

When the censoring time is not fully observed, we average the fixed-$c$ loss over the compatible conditional law of $C$ given the observed data:
\begin{equation}
	\label{eq:cens-engression}
    \mathcal R_{\mathrm{cens\text{-}ES}}(g)
    :=
    \mathbb E\!\left[
        \mathcal R_{\mathrm{local\text{-}cens\text{-}ES}}(g;Y,X,C)
        \mid Y,\Delta,X
    \right].
\end{equation}
Under conditional independent censoring, this is exactly the marginalized right-censored energy score objective introduced above. When there is no censoring, $\psi_c$ is the identity map and the loss reduces to the ordinary engression energy score. With censoring, the model is instead trained to match the law of the censored outcome, and therefore recovers the latent event-time law only to the extent that it remains identifiable under the censoring mechanism. We refer to this procedure as \emph{censored engression}. We also present this approach with a population optimality guarantee; see Proposition \ref{prop:censored-engression}.

\section{Experiments}
\label{sec:experiments}
\subsection{Proper scoring rules for evaluating forecasts}
\label{sec:forecast-evaluation}
We evaluate whether the proposed scores rank forecasts correctly under censoring. Because this oracle-ranking property concerns the unknown latent law of $T\mid X$, it cannot be verified from a real censored dataset. We therefore use controlled DGP-known simulations to validate scoring rules.

We compare our localized and marginalized scores with the forecast-dependent weighted scores of \citet{yanagisawa_proper_2023}. For the latter, we report both an oracle-weight version, where weights are computed from the true event-time law $F_0$, and a plug-in version, where weights are computed from the candidate forecast itself. The simulation design is given in Appendix~\ref{app:forecast-evaluation-setup}.

\paragraph{Results and discussion.}
\begin{wraptable}{r}{0.32\textwidth}
\vspace{-1.0em}
\centering
\scriptsize
\setlength{\tabcolsep}{4.0pt}
\renewcommand{\arraystretch}{0.92}
\caption{
Rank of the oracle forecast. Rank \(1\) means best mean score. 
Regimes A--C compare \(F_0\) against \(F_1,\ldots,F_4\); regime D compares \(F^{(D)}_0\) against four exploit forecasts \(F_5(\epsilon)\).
}
\label{tab:oracle-ranks}
\begin{tabular}{lcccc}
\toprule
Scoring rule & A & B & C & D \\
\midrule
CRPS, latent & 1 & 1 & 1 & 1 \\
Pinball, latent & 1 & 1 & 1 & 1 \\
Brier, latent & 1 & 1 & 1 & 1 \\
NLL, latent & 1 & 1 & 1 & 1 \\
\midrule
NLL~\citep{yanagisawa_proper_2023}, plug-in & 2 & 1 & 2 & 2 \\
Brier MC~\citep{yanagisawa_proper_2023}, plug-in & 1 & 1 & 1 & 5 \\
Brier~\citep{yanagisawa_proper_2023}, plug-in & 1 & 1 & 2 & 5 \\
Pinball~\citep{yanagisawa_proper_2023}, plug-in & 1 & 1 & 1 & 1 \\
RPS~\citep{yanagisawa_proper_2023}, plug-in & 2 & 1 & 2 & 5 \\
\midrule
Censored NLL, ours & 1 & 1 & 1 & 1 \\
CRPS, ours & 1 & 1 & 1 & 1 \\
Pinball, ours & 1 & 1 & 1 & 1 \\
Brier, ours & 1 & 1 & 1 & 1 \\
\bottomrule
\end{tabular}
\vspace{-1.0em}
\end{wraptable}
Table~\ref{tab:oracle-ranks} reports the rank of the oracle forecast under each scoring rule or score family, for four censoring regimes, with rank $1$ indicating the best mean score. The corresponding raw mean score values are reported in Appendix~\ref{app:forecast-evaluation-score-values}.

The latent gold-standard scores rank the oracle first in every regime, as expected. The same is true for all of our localized and marginalized scores: across regimes A--D, our CRPS, pinball, Brier, and censored log scores all assign rank \(1\) to the oracle forecast. 
Thus, when evaluation is based on the censoring mechanism rather than on the forecast being scored, the oracle forecast is consistently preferred.

The oracle-weight scores of \citet{yanagisawa_proper_2023} also rank \(F_0\) first in all regimes, showing that the underlying scores behave as intended when the weights are fixed at their correct oracle values. 
The failures arise in the practically relevant plug-in version, where the weights depend on the forecast being evaluated. 
The plug-in NLL ranks the oracle second in regimes A, C, and D; the plug-in RPS ranks the oracle second in regimes A and C and last in regime D; and the plug-in Brier scores also fail in regimes C or D. 
The raw values in Appendix~\ref{app:forecast-evaluation-score-values} show that these ranking reversals are caused by exploit forecasts obtaining lower plug-in scores than the oracle, whereas our scores keep assigning lower scores to the oracle throughout.

A separate practical issue is estimation of the censoring distribution. 
The sensitivity results in Appendix~\ref{app:censoring-estimation-sensitivity} show that when \(G(\cdot\mid X)\) is replaced by pooled estimates, the oracle ranking is preserved in regime B and remains largely stable in regime C, with only a mild degradation for the discrete Brier score under a misspecified pooled Weibull censoring model. 
This reflects ordinary censoring-model misspecification rather than forecast-dependent circularity.

\subsection{Synthetic multivariate censored-engression experiment}
\label{sec:synthetic-censored-engression}
We next study censored engression as a learning method for multivariate right-censored event times.  We consider synthetic survival data with Gaussian covariates, a shared censoring time, and a multivariate event-time vector $T=(T_1,\ldots,T_k)$. The main experiments use a covariate-dependent mixture log-normal latent DGP, which induces multimodal conditional event-time distributions and is intentionally misspecified for the Weibull and copula baselines. We vary the dimension $k\in\{2,3,5,10\}$ to assess scalability. Full details are given in Appendix~\ref{app:synthetic-censored-engression}.

We compare censored engression with oracle references, naive observed engression, and likelihood-based baselines. \emph{DGP} denotes the true data-generating conditional law and is therefore an oracle reference, not a fitted method. \emph{Latent engression} is an infeasible benchmark trained directly on the latent event times $T$. The feasible methods are trained only on $(X,Y,\Delta)$: naive observed engression treats the censored vector $Y$ as fully observed, censored engression uses the localized \eqref{eq:local-cens-engression} or marginalized censored energy score \eqref{eq:cens-engression}, and the likelihood-based baselines use the censored logarithmic score (see Table \ref{tab:right-censoring-scores}). The likelihood baselines include independent Weibull, conditional MLP Weibull, Clayton copula Weibull, and conditional MLP Clayton copula Weibull models. For $k=2$, we additionally include bivariate Gumbel copula and joint discrete-grid likelihood baselines.

\paragraph{Results and discussion.}
We report two criteria. The primary criterion is the censored energy score on held-out censored data, which is available for all methods and corresponds to the observed-data prediction task. We also report the latent energy score against the unobserved true event-time vector $T$, which is available only because the experiment is synthetic.

A selected comparison of censored engression against it naive counterpart is depicted in Table \ref{tab:censored-engression-selected-results}, complete results of all settings and benchmarks are presented in 
Tables~\ref{tab:synthetic-censored-engression-mixture-k2}--\ref{tab:synthetic-censored-engression-mixture-k10}. The experiments show a consistent pattern across censoring regimes and dimensions. Censored engression obtains the lowest censored energy score among feasible methods in all settings. The improvement over naive observed engression is substantial, especially under random and covariate-dependent censoring. For example, with $k=2$ under uniform censoring, naive observed engression obtains a censored energy score of $0.5524$, compared with $0.4904$ for censored engression. With $k=10$, the corresponding scores are $1.6402$ and $1.3382$. Thus, treating the censored vector as fully observed leads to an increasingly poor observed-data predictive distribution as the multivariate problem becomes more challenging. For $k=10$, the censored engression approach performs even better than the latent engression approach on censored ES.

The likelihood-based baselines are stronger competitors than naive observed training because they are fitted with the proper censored log score. Among them, the conditional MLP Clayton copula baseline is generally strongest, confirming that flexible covariate-dependent marginals and dependence modeling are important. Nevertheless, under the mixture log-normal DGP, censored engression consistently achieves lower censored energy score. This supports the main motivation for using a sample-based distributional model: the mixture DGP is multimodal and lies outside the Weibull-copula family, whereas censored engression can represent more flexible joint predictive distributions.

The latent energy score gives a complementary diagnostic rather than the main evaluation target. It measures recovery of the full latent event-time vector, including parts of the distribution that are only partially identified from censored observations. As expected, the DGP and latent engression provide oracle references. Among feasible methods, likelihood-based copula models can sometimes obtain lower latent energy score than censored engression, especially for larger $k$, even when they are worse on the censored energy score. This is not contradictory: censored engression is trained to optimize the proper observed-data criterion, while latent recovery beyond the identifiable region is not fully determined by the censored data. The main empirical conclusion is therefore that censored engression gives the best observed-data predictive performance among feasible methods while remaining applicable as the event-time dimension increases.

As a sanity check, Appendix~\ref{app:synthetic-censored-engression} repeats the $k=2$ experiment under the original unimodal log-normal DGP. This setting is comparatively favorable to the likelihood-based Weibull and copula baselines. As expected, these baselines are highly competitive, and in some cases slightly better on latent energy score, while censored engression remains close on the observed censored energy score and continues to improve substantially over naive observed engression.

\subsection{Illustrative clinical use case: early AKI forecasting in the ICU}
\label{sec:aki-use-case}

We use early acute kidney injury (AKI) forecasting in the intensive care unit (ICU) as a realistic illustration of censored engression on multivariate time-to-event data. At each prediction time, the goal is to forecast the joint distribution of the remaining time to two clinically meaningful endpoints: creatinine-defined AKI and urine-output-defined AKI. The predictors are routinely available variables, including demographics, baseline renal function, elapsed ICU time, and summary features of recent creatinine and urine-output measurements over the preceding $72$ hours. The analysis uses de-identified ICU data from MIMIC-IV \citep{johnson_mimic-iv_2023}; additional cohort, endpoint, and evaluation details are given in Appendix~\ref{app:aki-additional-results}. % TODO: add the appropriate MIMIC-IV citation key in the final bibliography.

The observed target is right-censored by ICU discharge or death. This exit time is observed in the database, so the localized version of our scoring framework can be used directly: for each patient and prediction time, generated event times and observed outcomes are both mapped through the same censoring time before the score is evaluated. We report localized energy scores for the joint bivariate outcome and for each marginal endpoint, together with localized Brier scores for the composite endpoint ``either AKI'' at clinically relevant early-warning horizons. The comparison in Table~\ref{tab:censored-engression-selected-results} isolates the effect of the censoring-aware objective by comparing censored engression with a naive observed engression baseline that uses the same type of generator but treats the censored time $Y=\min(T,C)$ as if it were the true event time.

\begin{table}[t]
\centering
\small
\setlength{\tabcolsep}{4.0pt}
\caption{
Selected results for censored engression. 
Lower scores are better. 
Synthetic rows report both observed censored energy score and latent energy score under uniform censoring for the mixture log-normal DGP. 
DGP is the true data-generating law and latent engression is an infeasible reference trained on uncensored event times. 
AKI rows report localized energy scores; no latent oracle is available in the real-data use case. 
Bold indicates the best feasible observed-data result among the displayed methods.
}
\label{tab:censored-engression-selected-results}

\begin{adjustbox}{max width=\textwidth}
\begin{tabular}{llcccc}
\toprule
Setting & Score & DGP & Latent engression & Naive obs. engression & Censored engression \\
\midrule
Synthetic, \(k=2\), uniform 
& Censored ES 
& \(0.4840\) & \(0.4907\) & \(0.5524\) & \(\mathbf{0.4904}\) \\
& Latent ES 
& \(1.2468\) & \(1.2601\) & \(1.4440\) & \(\mathbf{1.2887}\) \\
\addlinespace[0.2em]

Synthetic, \(k=10\), uniform 
& Censored ES 
& \(1.3131\) & \(1.6082\) & \(1.6402\) & \(\mathbf{1.3382}\) \\
& Latent ES 
& \(9.4613\) & \(10.0111\) & \(16.5122\) & \(\mathbf{14.2576}\) \\
\midrule

AKI, joint 
& Loc. ES 
& -- & -- & \(43.442\) & \(\mathbf{30.933}\) \\
AKI, creatinine 
& Loc. ES 
& -- & -- & \(23.591\) & \(\mathbf{15.439}\) \\
AKI, urine-output 
& Loc. ES 
& -- & -- & \(32.981\) & \(\mathbf{23.769}\) \\
\bottomrule
\end{tabular}
\end{adjustbox}

\end{table}

Censored engression improves all selected localized energy scores in Table~\ref{tab:censored-engression-selected-results}: the joint score decreases from $43.442$ to $30.993$, the creatinine-AKI marginal score from $23.591$ to $15.439$, and the urine-output-AKI marginal score from $32.981$ to $23.769$. The improvement is therefore not driven by only one endpoint; accounting for censoring improves both the joint predictive distribution and each marginal event-time distribution. Fixed-horizon results show the same direction. For the composite endpoint, the localized Brier score decreases from $0.0257$ to $0.0212$ at $6$ hours, from $0.0429$ to $0.0398$ at $12$ hours, and from $0.0730$ to $0.0714$ at $24$ hours.

We also computed threshold-based enrichment metrics for short-term risk stratification, such as positive predictive value and sensitivity among the patients with the highest predicted risk of either AKI. These metrics are useful for interpretation but are not the primary evidence: unlike the localized scores, they depend on the chosen risk cutoff and are computed only among patients whose horizon-specific outcome is evaluable. With this caveat, they are directionally consistent with the proper-score results. For example, at $12$ hours, the top-$5\%$ positive predictive value for the composite endpoint increases from $0.727$ to $0.782$, and at $24$ hours it increases from $0.759$ to $0.840$; sensitivity at the same cutoff also increases from $0.316$ to $0.340$ and from $0.167$ to $0.185$, respectively.

Appendix~\ref{app:aki-additional-results} compares censored engression with standard survival baselines. Censored engression obtains the best localized Brier score for the composite endpoint across the reported horizons (Table~\ref{tab:aki-utility-compact}) and the best joint and marginal localized energy scores among the methods that generate full event-time distributions (Table~\ref{tab:aki-es-baselines}). These results reinforce the main conclusion of the use case: explicitly modeling the censoring mechanism improves observed-data predictive performance relative to both naive training on censored outcomes and several likelihood-based survival baselines.

\section{Conclusion}
This work argues that censored probabilistic evaluation should be built on forecast-independent censoring adjustment. IPCW scores target the latent event-time distribution only when the weights are determined by the censoring mechanism rather than by the forecast being evaluated. This preserves the oracle ranking across a range of censoring regimes, while forecast-dependent plug-in scores change the evaluation target and can induce ranking reversals.

The same methodological principle yields a training objective for censored generative survival modeling. Censored engression uses the censored energy score to extend engression to right-censored multivariate outcomes, improving over naive training on censored observations.

\paragraph{Limitations.}
In practice, the censoring distribution must usually be estimated. This is a nuisance-estimation problem, not a circularity, as long as the censoring model is fixed independently of the forecast being scored. Our sensitivity results suggest that such estimation is less damaging than forecast-dependent weighting, though misspecification can still degrade performance. %Future work should study uncertainty propagation for the censoring model and joint learning of event-time and censoring mechanisms.

\paragraph{Code availability.}
The code accompanying this work will be made publicly available upon publication.

\begin{ack}
    Jef Jonkers is funded by the Research Foundation Flanders (FWO, Ref. 1S11525N).
\end{ack}

\bibliographystyle{unsrtnat}
\bibliography{references}

%%%%%%%%%%%%%%%%%%%%%%%%%%%%%%%%%%%%%%%%%%%%%%%%%%%%%%%%%%%%

\appendix

\section{Theory and proofs}
\label{app:proofs}

\subsection{Proof of Proposition~\ref{prop:marginalized-properness}}

We use $S_c^\flat(F;\psi_c^\flat(t))$ and $S_c^\flat(F;Y,\Delta)$ interchangeably for the abstract and right-censoring encodings of the same fixed-$c$ censored observation.

\begin{proof}
	Fix $x$. Let $P_x$ denote the true conditional law of $T\mid X=x$ and let $F_x$ denote a candidate forecast. Let $H(\cdot\mid x)$ be the true conditional distribution of the censoring time $C\mid X=x$. For each fixed censoring time $c$, define the induced censored laws
	\begin{equation*}
		P_{c,x}^{\flat}:=\mathcal L(\psi_c^\flat(T)\mid X=x),
		\qquad
		F_{c,x}^{\flat}:=\mathcal L(\psi_c^\flat(Z)\mid X=x),
		\quad Z\sim F_x.
	\end{equation*}
	
	By definition,
	\begin{equation*}
		\bar{\mathcal S}_x(F)-\bar{\mathcal S}_x(P)
		=
		\mathbb E_P\!\left[
		\bar S(F;Y,\Delta,X)-\bar S(P;Y,\Delta,X)
		\mid X=x
		\right].
	\end{equation*}
	The marginalized score averages the fixed-$c$ localized score over the conditional law of $C$ given the observed data,
	\begin{equation*}
		\Pi(dc\mid Y,\Delta,X)
		:=
		\mathcal L(C\in dc\mid Y,\Delta,X),
	\end{equation*}
	where this conditional law is induced by the true censoring distribution $H(\cdot\mid X)$. Hence, by the tower property,
	\begin{align*}
		\bar{\mathcal S}_x(F)-\bar{\mathcal S}_x(P)
		&=
		\mathbb E_P\!\left[
		\mathbb E_P\!\left[
		S_C^{\flat}(F_x;Y,\Delta)
		-
		S_C^{\flat}(P_x;Y,\Delta)
		\mid Y,\Delta,X
		\right]
		\middle| X=x
		\right] \\
		&=
		\mathbb E_P\!\left[
		S_C^{\flat}(F_x;Y,\Delta)
		-
		S_C^{\flat}(P_x;Y,\Delta)
		\mid X=x
		\right].
	\end{align*}
	
	Under conditional independent censoring, $T\perp C\mid X$, we may condition on the realized censoring time and integrate over its true conditional law:
	\begin{align*}
		\bar{\mathcal S}_x(F)-\bar{\mathcal S}_x(P)
		&=
		\int
		\mathbb E_{P_x}\!\left[
		S_c^{\flat}(F_x;\psi_c^\flat(T))
		-
		S_c^{\flat}(P_x;\psi_c^\flat(T))
		\right]
		H(dc\mid x).
	\end{align*}
	For fixed $c$, the distribution of $\psi_c^\flat(T)$ under the truth is exactly $P_{c,x}^{\flat}$. Therefore the inner expectation is the divergence of the localized score between the induced censored laws:
	\begin{equation*}
		\mathbb E_{P_x}\!\left[
		S_c^{\flat}(F_x;\psi_c^\flat(T))
		-
		S_c^{\flat}(P_x;\psi_c^\flat(T))
		\right]
		=
		D_{S_c^{\flat}}\!\left(P_{c,x}^{\flat}\|F_{c,x}^{\flat}\right).
	\end{equation*}
	Hence
	\begin{equation*}
		\bar{\mathcal S}_x(F)-\bar{\mathcal S}_x(P)
		=
		\int
		D_{S_c^{\flat}}\!\left(P_{c,x}^{\flat}\|F_{c,x}^{\flat}\right)
		H(dc\mid x).
	\end{equation*}
	
	By propriety of the fixed-censoring localized score, the integrand is nonnegative for $H(\cdot\mid x)$-almost every $c$. Consequently,
	\begin{equation*}
		\bar{\mathcal S}_x(F)\ge \bar{\mathcal S}_x(P).
	\end{equation*}
	
	If, in addition, the localized scores are strictly proper and the collection of induced censored laws for censoring times with positive probability identifies the latent law on $\mathcal I_x$, then equality can hold only if
	\begin{equation*}
		P_{c,x}^{\flat}=F_{c,x}^{\flat}
	\end{equation*}
	for $H(\cdot\mid x)$-almost every relevant $c$. By the assumed identifiability condition, this is equivalent to
	\begin{equation*}
		F_x=P_x\quad\text{on } \mathcal I_x.
	\end{equation*}
	Therefore,
	\begin{equation*}
		\bar{\mathcal S}_x(F)=\bar{\mathcal S}_x(P)
		\quad\Longleftrightarrow\quad
		F_x=P_x \text{ on } \mathcal I_x.
	\end{equation*}
\end{proof}

\subsection{Population optimality of censored engression}

\begin{proposition}[Population optimality of censored engression]
    \label{prop:censored-engression}
    Assume conditional independent censoring, $T\perp C\mid X$, and assume that $\Pi(dc\mid Y,\Delta,X)$ is the true forecast-independent conditional law of the censoring time given $(Y, \Delta, X)$. Assume that, for $H(\cdot\mid X=x)$-almost every censoring time $c$, the fixed-$c$ censored energy score is well-defined and strictly proper for the corresponding censored law. Suppose there exists $g_0\in\mathcal M$ such that
    \begin{equation*}
        g_0(x,\varepsilon)\sim P(T\mid X=x)
        \qquad
        \text{for }P_X\text{-almost every }x.
    \end{equation*}
    Let $\widetilde g \in \arg\min_{g\in\mathcal M} \mathcal R_{\mathrm{cens\text{-}ES}}(g)$. Then
    \begin{equation*}
        \mathcal L(\psi_c(\widetilde g(x,\varepsilon))\mid X=x)
        =
        \mathcal L(\psi_c(T)\mid X=x)
    \end{equation*}
    for $H(\cdot\mid X=x)$-almost every $c$ and $P_X$-almost every $x$. If these censored laws identify the latent law on an identifiable region $\mathcal I_x$, then
    \begin{equation*}
        \mathcal L(\widetilde g(x,\varepsilon)\mid X=x)
        =
        P(T\mid X=x)
        \quad
        \text{on }\mathcal I_x
    \end{equation*}
    for $P_X$-almost every $x$.
\end{proposition}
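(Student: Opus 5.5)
The proof reduces population optimality to the marginalized propriety result, Proposition~\ref{prop:marginalized-properness}, applied with the fixed-$c$ censored energy score as the localized score. First we identify the objective. $\mathcal R_{\mathrm{cens\text{-}ES}}(g)$ in \eqref{eq:cens-engression} is the marginalized right-censored energy score of the forecast $F_{g,x}=\mathcal L(g(x,\varepsilon))$. We take its population version $\mathbb E_P[\,\cdot\,]$, integrating over $(X,Y,\Delta)$ and the conditional law $\Pi$. By the tower property, exactly as in the proof of Proposition~\ref{prop:marginalized-properness}, it equals $\mathbb E_{X}\bigl[\bar{\mathcal S}_X(F_{g,X})\bigr]$, where the inner expected score uses the localized energy score $S^\flat_{\mathrm{ES},c}$ of \eqref{eq:local-cens-engression}. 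Fixing $x$ and reusing the conditional-independence step of that proof gives
\[
\bar{\mathcal S}_x(F_{g,x})-\bar{\mathcal S}_x(P_x)=\int D_{S^\flat_{\mathrm{ES},c}}\bigl(P^\flat_{c,x}\,\|\,F^\flat_{g,c,x}\bigr)\,H(dc\mid x)\;\ge 0 .
\]
Here $F^\flat_{g,c,x}=\mathcal L(\psi_c(g(x,\varepsilon)))$.

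Second, we compare $\widetilde g$ with $g_0$. Since $g_0$ reproduces $P(T\mid X=x)$ for $P_X$-a.e.\ $x$, every integrand above vanishes for $g=g_0$. Hence $\mathcal R_{\mathrm{cens\text{-}ES}}(g_0)$ attains the pointwise lower bound $\mathbb E_X[\bar{\mathcal S}_X(P_X)]$. Because $\widetilde g$ is a minimizer over $\mathcal M\ni g_0$, we have $\mathcal R(\widetilde g)\le\mathcal R(g_0)$. The difference $\mathcal R(\widetilde g)-\mathcal R(g_0)$ equals
\[
\mathbb E_X\!\left[\int D_{S^\flat_{\mathrm{ES},c}}\bigl(P^\flat_{c,X}\,\|\,F^\flat_{\widetilde g,c,X}\bigr)\,H(dc\mid X)\right],
\]
which is nonnegative, so it must be zero. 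A nonnegative function with zero integral vanishes almost everywhere. Therefore the divergence is zero for $H(\cdot\mid X=x)$-a.e.\ $c$ and $P_X$-a.e.\ $x$. This requires the joint measurability of $(x,c)\mapsto D(\cdot)$, which we take as implicit in the well-definedness assumption.

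Third, strict propriety of the fixed-$c$ energy score turns zero divergence into $F^\flat_{\widetilde g,c,x}=P^\flat_{c,x}$, which is the first claim. Under the additional identifiability hypothesis, the same step as at the end of Proposition~\ref{prop:marginalized-properness} gives $\mathcal L(\widetilde g(x,\varepsilon))=P(T\mid X=x)$ on $\mathcal I_x$ for $P_X$-a.e.\ $x$.

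The main obstacle is not the algebra but finiteness. Subtracting expected scores requires $\mathcal R(g_0)<\infty$, which needs finite first moments of $\psi_c(T)$ and of $\psi_c(g(x,\varepsilon))$. For finite $c$ these are automatic because $\psi_c$ is bounded. When $\Pi$ puts mass on large $c$ (the $\Delta=1$ branch), we invoke the assumed well-definedness to guarantee integrability. If needed, we would restrict $\mathcal M$ to generators with integrable outputs so that the differences are finite and Fubini applies when swapping the $c$-integral with $\mathbb E_X$.
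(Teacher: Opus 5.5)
Your proposal is correct and follows essentially the same route as the paper. You write the population objective as the $H(dc\mid x)$-average of fixed-$c$ energy-score divergences, reusing the tower-property and conditional-independence computation from Proposition~\ref{prop:marginalized-properness}. You then note that $g_0$ attains zero, force the minimizer's integrated divergence to vanish, and conclude via strict propriety and identifiability. Your explicit handling of joint measurability and of the finiteness of $\mathcal R(g_0)$, which makes the subtraction and the Fubini swap legitimate, fills in details the paper's sketch leaves implicit.
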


\begin{proof}
    Fix $x$ and a censoring time $c$. By assumption, the fixed-$c$ censored energy score is strictly proper for the censored law $\mathcal L(\psi_c(T)\mid X=x)$. Hence, the corresponding fixed-$c$ expected loss is minimized when
    \begin{equation*}
        \mathcal L(\psi_c(g(x,\varepsilon))\mid X=x)
        =
        \mathcal L(\psi_c(T)\mid X=x).
    \end{equation*}

    Since $\Pi(dc\mid Y,\Delta,X)$ is the true forecast-independent conditional law of the censoring time given the observed data, the marginalized objective is the corresponding average of these fixed-$c$ proper score divergences. Under correct specification, $g_0$ induces the true latent law and therefore attains zero integrated divergence, so it minimizes the population objective. Any other population minimizer $\widetilde g$ must also attain zero integrated divergence. Since the integrand is nonnegative for $H(\cdot\mid X=x)$-almost every $c$, it follows that
    \begin{equation*}
        \mathcal L(\psi_c(\widetilde g(x,\varepsilon))\mid X=x)
        =
        \mathcal L(\psi_c(T)\mid X=x)
    \end{equation*}
    for $H(\cdot\mid X=x)$-almost every $c$ and $P_X$-almost every $x$. The final claim then follows from the stated identifiability condition.
\end{proof}

\section{Specific score derivations}
\label{app:score-derivations}
In each example, we proceed in the same way: we start from the uncensored score, define a fixed-censoring localized score, marginalize over the censoring time to obtain an observed-data score, and then relate this observed-data score back to the original latent target.

\subsection{Logarithmic score}
\label{app:log-score-censoring}
For the logarithmic score, the marginalization step simplifies substantially. Under right censoring, the fixed-$c$ localized score depends on the censoring time only through the censored contribution, which is either independent of $c$ or observed exactly from the data.

For fixed censoring time $c$, the induced censored law on the observed space has a density-mass representation
\begin{equation*}
    q_c^F(y,\delta\mid x)
    =
    \begin{cases}
        f(y\mid x), & \delta=1,\; y\le c,\\
        S(c\mid x), & \delta=0,\; y=c,
    \end{cases}
\end{equation*}
where $f(\cdot\mid x)$ is the conditional density of $T$ and $1-F(t\mid x)=\mathbb P(T>t\mid X=x)$ is the corresponding survival function. Hence, the fixed-$c$ localized logarithmic score is
\begin{equation}
    S^{\flat}_{\log,c}(F;y,\delta,x)
    =
    -\log q_c^F(y,\delta\mid x)
    =
    -\delta \log f(y\mid x)-(1-\delta)\log (1 - F(c\mid x)).
    \label{eq:fixed-c-log-score}
\end{equation}

The marginalized score is
\begin{equation*}
    \bar S_{\log}(F;y,\delta,x)
    =
    \mathbb E\!\left[
        S^{\flat}_{\log,C}(F;y,\delta,x)
        \mid Y=y,\Delta=\delta,X=x
    \right].
\end{equation*}
We distinguish the two possible values of $\delta$.

If $\delta=1$, then conditioning on $Y=y,\Delta=1,X=x$ implies that $T=y$ and $C\ge y$.
By \eqref{eq:fixed-c-log-score},
\begin{equation*}
    S^{\flat}_{\log,C}(F;y,1,x)=-\log f(y\mid x),
\end{equation*}
which does not depend on $C$. Therefore, $\bar S_{\log}(F;y,1,x)=-\log f(y\mid x)$.

If $\delta=0$, then conditioning on $Y=y,\Delta=0,X=x$ implies that $C=y$. Again by \eqref{eq:fixed-c-log-score},
\begin{equation*}
    S^{\flat}_{\log,C}(F;y,0,x)=-\log (1-F(C\mid x))=-\log (1 - F(y\mid x)),
\end{equation*}
and thus $\bar S_{\log}(F;y,0,x)=-\log (1 - F(y\mid x))$.

Combining the two cases yields
\begin{equation*}
    \bar S_{\log}(F;y,\delta,x) = -\delta\log f(y\mid x)-(1-\delta)\log (1 - F(y\mid x)).
\end{equation*}

In discrete or mixed settings, a boundary convention is needed at $Y=C$. With the convention $\Delta=\mathbb 1\{T\le C\}$, the censored contribution is naturally written in terms of $\mathbb P(T>Y\mid X)$, i.e.\ $1 - F(Y\mid X)$.

\subsection{CRPS}

For the CRPS, the marginalization step is nontrivial. Under right censoring, the fixed-$c$ localized score depends on the censoring time through the upper integration limit, so averaging over the conditional law of $C$ is essential.

Recall that for a distribution function $F(\cdot\mid x)$ and observation $t$, the CRPS is
\begin{equation*}
    S_\mathrm{CRPS}(F;t,x)
    =
    \int_0^t F(s\mid x)^2\,ds
    +
    \int_t^\infty (1-F(s\mid x))^2\,ds.
\end{equation*}

For fixed censoring time $c$, the censored law is the law of $\min(T,c)$, whose distribution function is
\begin{equation*}
    F_c^{\flat}(s\mid x)
    =
    \begin{cases}
    F(s\mid x), & s<c,\\
    1, & s\ge c.
    \end{cases}
\end{equation*}
Hence, the fixed-$c$ localized CRPS is $S_{\mathrm{CRPS}, c}^{\flat}(F;y,\delta,x) = S_\mathrm{CRPS}(F_c^{\flat};y,x)$, which can be written as
\begin{equation}
    S_{\mathrm{CRPS}, c}^{\flat}(F;y,\delta,x) =
    \int_0^y F(s\mid x)^2\,ds + \delta\int_y^c (1-F(s\mid x))^2\,ds.
\label{eq:fixed-c-crps}
\end{equation}
If $\delta=1$, then $y\le c$ is observed exactly and the second term runs only up to the censoring time $c$, because $F_c^{\flat}(s\mid x)=1$ for $s\ge c$. If $\delta=0$, then necessarily $y=c$, and the second term vanishes. 

The marginalized CRPS is
\begin{equation*}
    \bar{S}_{\mathrm{CRPS}}(F;y,\delta,x)
    =
    \mathbb E\!\left[
    S_{\mathrm{CRPS}, C}^{\flat}(F;y,\delta,x)\mid Y=y,\Delta=\delta,X=x
    \right].
\end{equation*}

If $\delta=0$, then conditioning on $Y=y,\Delta=0,X=x$ implies that $C=y$, so
\begin{equation*}
    \bar{S}_{\mathrm{CRPS}}(F;y,0,x) = \int_0^y F(s\mid x)^2\,ds.
\end{equation*}

If $\delta=1$, then conditioning on $Y=y,\Delta=1,X=x$ implies that $C\ge y$. Using \eqref{eq:fixed-c-crps},
\begin{equation*}
    \bar{S}_{\mathrm{CRPS}}(F;y,1,x) = \int_0^y F(s\mid x)^2\,ds +
    \mathbb E\!\left[
    \int_y^C (1-F(s\mid x))^2\,ds
    \;\middle|\;
    C\ge y,X=x
    \right].
\end{equation*}

By Fubini's theorem,
\begin{equation*}
    \mathbb E\!\left[
    \int_y^C (1-F(s\mid x))^2\,ds
    \;\middle|\;
    C\ge y,X=x
    \right]
    =
    \int_y^\infty
    \mathbb P(C> s\mid C\ge y,X=x)\,
    (1-F(s\mid x))^2\,ds.
\end{equation*}

Writing $G(t^-\mid x):=\mathbb P(C\ge t \mid X=x)$,, this becomes
\begin{equation*}
    \int_y^\infty \frac{G(s\mid x)}{ G(y^-\mid x)}(1-F(s\mid x))^2\,ds.
\end{equation*}

Therefore,
\begin{equation}
    \bar{S}_{\mathrm{CRPS}}(F;y,\delta,x)
    =
    \int_0^y F(s\mid x)^2\,ds
    +
    \delta\int_y^\infty
    \frac{ G(s\mid x)}{ G(y^-\mid x)}
    (1-F(s\mid x))^2\,ds.
    \label{eq:marginalized-crps-right-censoring}
\end{equation}
This is the usual right-censored CRPS under conditional independent censoring.

\subsection{Energy score}
For the energy score, the fixed-censoring localized score is most naturally written in its expectation form. Unlike the logarithmic score, the marginalization step does not simplify away in general, because the fixed-censoring score depends on the censoring configuration through both expectation terms. In the univariate case with $\beta=1$, the energy score reduces to the CRPS.

Let $\beta\in(0,2)$, and recall that for a predictive law $F(\cdot\mid x)$ on $\mathbb R_+^K$, the energy score is
\begin{equation*}
    S_{\mathrm{ES}, \beta}(F;t,x) = \mathbb E\!\left[\|Z-t\|^\beta\mid X=x\right] - \frac12\mathbb E\!\left[\|Z-Z'\|^\beta\mid X=x\right],
\end{equation*}
where $Z,Z'\stackrel{\text{iid}}{\sim}F(\cdot\mid x)$.

For fixed componentwise censoring threshold $c=(c_1,\dots,c_K)$, define
\begin{equation*}
    \psi_c(t) = (\min(t_1,c_1),\dots,\min(t_K,c_K)).
\end{equation*}
If $T\sim F(\cdot\mid X=x)$, the induced censored law is
\begin{equation*}
    F_c^{\flat}(\cdot\mid x):=\mathcal L(\psi_c(T)\mid X=x),
\end{equation*}
and the observed censored outcome is $y^{\mathrm{obs}}=\psi_c(y)$. Hence the fixed-$c$ localized energy score is
\begin{align}
    S_{\mathrm{ES},\beta,c}^{\flat}(F;y^{\mathrm{obs}},x)
    &=
    S_{\mathrm{ES},\beta}(F_c^{\flat};y^{\mathrm{obs}},x) \notag\\
    &=
    \mathbb E\!\left[\|\psi_c(Z)-y^{\mathrm{obs}}\|^\beta\mid X=x\right]
    -
    \frac12\mathbb E\!\left[\|\psi_c(Z)-\psi_c(Z')\|^\beta\mid X=x\right],
    \label{eq:fixed-c-energy-score}
\end{align}
where $Z,Z'\stackrel{\text{iid}}{\sim}F(\cdot\mid X=x)$.

If the censoring threshold is random, the marginalized energy score is obtained by averaging \eqref{eq:fixed-c-energy-score} over the conditional law of $C$ given the observed data:
\begin{equation*}
    \bar{S}_{\mathrm{ES}, \beta}(F;Y,\Delta,X)
    =
    \mathbb E\!\left[
    S_{\mathrm{ES}, \beta, C}^{\flat}(F;Y,\Delta,X)
    \mid Y,\Delta,X
    \right].
\end{equation*}
In contrast to the logarithmic score and the CRPS, no further simplification is available in general, since the fixed-$c$ localized score depends explicitly on $c$ through both expectation terms.

\subsection{Brier score}

Let $\tau\ge 0$ denote a fixed evaluation horizon. The uncensored Brier score at horizon $\tau$ is
\begin{equation}
    S_{\mathrm{BS},\tau}(F;t,x)
    =
    \bigl(F(\tau\mid x)-\mathbb 1\{t\le \tau\}\bigr)^2.
    \label{eq:uncensored-brier}
\end{equation}

Note that we can do similar derivations for the binomial log-likelihood for $\tau\ge 0$.

For fixed censoring time $c$, let $Y_c:=\min(T,c)$, $F_c^{\flat}:=\mathcal L(Y_c\mid X=x)$. The natural localized Brier score at horizon $\tau$ is the ordinary Brier score applied to the censored distribution:
\begin{equation}
    S_{\mathrm{BS},\tau,c}^{\flat}(F;t,x)
    :=
    \bigl(F_c^{\flat}(\tau\mid x)-\mathbb 1\{\min(t,c)\le \tau\}\bigr)^2.
    \label{eq:fixed-c-brier-direct}
\end{equation}
Since
\begin{equation*}
    F_c^{\flat}(\tau\mid x)
    =
    \mathbb P(\min(T,c)\le \tau\mid X=x)
    =
    \begin{cases}
    F(\tau\mid x), & \tau<c,\\
    1, & \tau\ge c,
    \end{cases}
\end{equation*}
the score can be written more explicitly as
\begin{equation}
    S_{\mathrm{BS},\tau,c}^{\flat}(F;t,x)
    =
    \begin{cases}
        \bigl(F(\tau\mid x)-\mathbb 1\{t\le \tau\}\bigr)^2, & \tau<c,\\[1mm]
        0, & \tau\ge c.
    \end{cases}
    \label{eq:fixed-c-brier-piecewise}
\end{equation}
Thus, for each fixed $c$, the localized Brier score coincides with the ordinary Brier score on the identifiable region $\tau<c$ and becomes vacuous beyond it. In particular, because the ordinary Brier score is strictly proper for Bernoulli outcomes, \eqref{eq:fixed-c-brier-direct} is strictly proper for the censored Bernoulli target $\mathbb 1\{\min(T,c)\le \tau\}$, and therefore locally proper for the latent law on horizons $\tau<c$.

We now marginalize over the censoring time. Let $Y=\min(T,C)$, $\Delta=\mathbb 1\{T\le C\}$, and define
\begin{equation*}
    \bar S_{\mathrm{BS},\tau}(F;Y,\Delta,X)
    :=
    \mathbb E\!\left[
    S_{\mathrm{BS},\tau,C}^{\flat}(F;Y,X)
    \mid Y,\Delta,X
    \right].
\end{equation*}

Using \eqref{eq:fixed-c-brier-piecewise}, we obtain the observed-data score by distinguishing three cases. If $Y>\tau$, then necessarily $C>\tau$ and $T>\tau$, so
\begin{equation*}
    \bar S_{\mathrm{BS},\tau}(F;Y,\Delta,X)=F(\tau\mid X)^2.
\end{equation*}

If $Y\le \tau$ and $\Delta=0$, then $C=Y\le \tau$, hence $\tau\ge C$ and
\begin{equation*}
    \bar S_{\mathrm{BS},\tau}(F;Y,0,X)=0.
\end{equation*}
If $Y\le \tau$ and $\Delta=1$, then $T=Y\le \tau$ and $C\ge Y$. In this case,
\begin{equation*}
    S_{\mathrm{BS},\tau,C}(F;Y,X)
    =
    \begin{cases}
    (1-F(\tau\mid X))^2, & C>\tau,\\
    0, & Y\le C\le \tau.
    \end{cases}
\end{equation*}

Therefore,
\begin{equation*}
    \bar S_{\mathrm{BS},\tau}(F;Y,1,X)
    =
    \mathbb P(C>\tau\mid C\ge Y,X)\,(1-F(\tau\mid X))^2.
\end{equation*}
Writing
\begin{equation*}
    G(t\mid x):=\mathbb P(C>t\mid X=x), \qquad G(t^-\mid x):=\mathbb P(C\ge t\mid X=x),
\end{equation*}
this becomes
\begin{equation*}
    \bar S_{\mathrm{BS},\tau}(F;Y,1,X)
    =
    \frac{G(\tau\mid X)}{G(Y^-\mid X)}(1-F(\tau\mid X))^2.
\end{equation*}
Combining the three cases yields
\begin{equation}
    \bar S_{\mathrm{BS},\tau}(F;Y,\Delta,X)
    =
    \mathbb 1\{Y>\tau\}F(\tau\mid X)^2
    +
    \Delta\,\mathbb 1\{Y\le \tau\}
    \frac{G(\tau\mid X)}{G(Y^-\mid X)}
    (1-F(\tau\mid X))^2.
    \label{eq:marginalized-brier-right}
\end{equation}
This is the observed-data version of the localized proper Brier score.

A convenient rescaling recovers the usual IPCW Brier score: dividing \eqref{eq:marginalized-brier-right} by $G(\tau \mid X)$ yields
\begin{equation*}
    \frac{\bar S_{\mathrm{BS},\tau}(F;Y,\Delta,X)}{G(\tau \mid X)} = \frac{\Delta \mathbb 1 \{ Y \le \tau\} (1 - F(\tau \mid X))^2}{G(Y^- \mid X)} + \frac{\mathbb 1 \{ Y > \tau \} F(\tau \mid X)^2}{G(\tau \mid X)},
\end{equation*}
which is exactly the standard IPCW Brier score \citep{graf_assessment_1999, gerds_consistent_2006}. Since this rescales the score by a strictly positive factor that does not depend on $F$, it preserves the same minimizer and hence the same propriety characteristics on the identifiable region.

\subsubsection{Integrated Brier score}
\label{app:ibs-right-censoring}
The integrated version fits especially naturally into the localization framework. Define the fixed-$c$ localized integrated Brier score by
\begin{equation}
    S_{\mathrm{IBS},c}^{\flat}(F;t,x)
    :=
    \int_0^\infty S_{\mathrm{BS},\tau,c}^{\flat}(F;t,x)\,d\tau.
    \label{eq:fixed-c-ibs-def}
\end{equation}
Using \eqref{eq:fixed-c-brier-piecewise}, this reduces to
\begin{equation}
    S_{\mathrm{IBS},c}^{\flat}(F;t,x)
    =
    \int_0^c
    \bigl(F(\tau\mid x)-\mathbb 1\{t\le \tau\}\bigr)^2
    \,d\tau.
    \label{eq:fixed-c-ibs}
\end{equation}
Thus the fixed-censoring localized IBS is simply the ordinary integrated Brier score truncated to the identifiable region $[0,c)$.

Marginalizing over the censoring time gives
\begin{equation*}
    \bar S_{\mathrm{IBS}}(F;Y,\Delta,X)
    =
    \mathbb E\!\left[
    S_{\mathrm{IBS},C}^{\flat}(F;Y,X)\mid Y,\Delta,X
    \right].
\end{equation*}

By Fubini's theorem and \eqref{eq:marginalized-brier-right},
\begin{align}
    \bar S_{\mathrm{IBS}}(F;Y,\Delta,X)
    &=
    \int_0^\infty
    \bar S_{\mathrm{BS},\tau}(F;Y,\Delta,X)\,d\tau \notag\\
    &=
    \int_0^Y F(\tau\mid X)^2\,d\tau
    +
    \Delta\int_Y^\infty
    \frac{G(\tau\mid X)}{G(Y^-\mid X)}
    (1-F(\tau\mid X))^2\,d\tau.
    \label{eq:marginalized-ibs-right}
\end{align}
This is the integrated observed-data score induced by the localized proper Brier construction.

In particular, \eqref{eq:marginalized-ibs-right} has the same structural form as the marginalized CRPS: an integral of squared CDF error up to the observed time, plus a tail term weighted by the conditional survival of the censoring time. In the univariate continuous setting, this is not merely a formal similarity: the integrated Brier score and the CRPS coincide, so the marginalized IBS and the marginalized CRPS are the same score up to the usual boundary convention.

\subsection{Pinball loss}

Let $\alpha\in(0,1)$ denote a quantile level, and let
\begin{equation*}
    q_\alpha(F\mid x):=\inf\{t\in\mathbb R_+:F(t\mid x)\ge \alpha\}
\end{equation*}
be the corresponding conditional $\alpha$-quantile implied by the forecast $F(\cdot\mid x)$. The uncensored quantile score is
\begin{equation}
    S_{\mathrm{Q},\alpha}(F;t,x)
    =
    \bigl(\alpha-\mathbb 1\{t<q_\alpha(F\mid x)\}\bigr)\bigl(t-q_\alpha(F\mid x)\bigr).
    \label{eq:uncensored-quantile-score}
\end{equation}
Equivalently,
\begin{equation*}
    S_{\mathrm{Q},\alpha}(F;t,x)
    =
    \alpha\bigl(t-q_\alpha(F\mid x)\bigr)_+
    +
    (1-\alpha)\bigl(q_\alpha(F\mid x)-t\bigr)_+.
\end{equation*}
This score is strictly consistent for the $\alpha$-quantile functional.

For fixed censoring time $c$, let $Y_c:=\min(T,c)$, $F_c^{\flat}:=\mathcal L(Y_c\mid X=x)$. The natural localized quantile score is the ordinary quantile score applied to the censored law:
\begin{equation}
    S_{\mathrm{Q},\alpha,c}^{\flat}(F;t,x)
    :=
    \bigl(\alpha-\mathbb 1\{\min(t,c)<q_\alpha(F_c^{\flat}\mid x)\}\bigr)
    \bigl(\min(t,c)-q_\alpha(F_c^{\flat}\mid x)\bigr).
    \label{eq:fixed-c-quantile-direct}
\end{equation}
Since
\begin{equation*}
    q_\alpha(F_c^{\flat}\mid x)=\min\!\bigl(q_\alpha(F\mid x),c\bigr),
\end{equation*}
this can be written as
\begin{equation}
    S_{\mathrm{Q},\alpha,c}^{\flat}(F;t,x)
    =
    \bigl(\alpha-\mathbb 1\{\min(t,c)<\min(q_\alpha(F\mid x),c)\}\bigr)
    \bigl(\min(t,c)-\min(q_\alpha(F\mid x),c)\bigr).
    \label{eq:fixed-c-quantile}
\end{equation}
Thus the fixed-censoring localized score is simply the ordinary quantile score for the censored
outcome $\min(T,c)$ and the censored quantile $\min(q_\alpha(F\mid x),c)$.

Let $Y=\min(T,C)$, $\Delta=\mathbb 1\{T\le C\}$, and define
\begin{equation*}
    \bar S_{\mathrm{Q},\alpha}(F;Y,\Delta,X)
    :=
    \mathbb E\!\left[
    S_{\mathrm{Q},\alpha,C}^{\flat}(F;Y,X)\mid Y,\Delta,X
    \right].
\end{equation*}

Write
\begin{equation*}
    q_\alpha(X):=q_\alpha(F\mid X),
    \qquad
    G(t\mid x):=\mathbb P(C>t\mid X=x),
    \qquad
    G(t^-\mid x):=\mathbb P(C\ge t\mid X=x).
\end{equation*}

If $\Delta=0$, then $C=Y$, so $q_\alpha(F_C^{\flat}\mid X)=\min(q_\alpha(F\mid X),Y),$ and therefore
\begin{equation*}
    \bar S_{\mathrm{Q},\alpha}(F;Y,0,X)
    =
    \bigl(\alpha-\mathbb 1\{Y<\min(q_\alpha(F\mid X),Y)\}\bigr)
    \bigl(Y-\min(q_\alpha(F\mid X),Y)\bigr)
    =
    \alpha\bigl(Y-q_\alpha(F\mid X)\bigr)_+.
\end{equation*}

If $\Delta=1$, then $T=Y$ and $C\ge Y$. In this case,
\begin{equation*}
    \bar S_{\mathrm{Q},\alpha}(F;Y,1,X)
    =
    \mathbb E\!\left[
    \bigl(\alpha-\mathbb 1\{Y<\min(q_\alpha(F\mid X),C)\}\bigr)
    \bigl(Y-\min(q_\alpha(F\mid X),C)\bigr)
    \mid C\ge Y,X
    \right].
\end{equation*}

If $q_\alpha(F\mid X)\le Y$, then $\min(q_\alpha(F\mid X),C)=q_\alpha(F\mid X)$ for every $C\ge Y$, so
\begin{equation*}
    \bar S_{\mathrm{Q},\alpha}(F;Y,1,X)
    =
    \alpha\bigl(Y-q_\alpha(F\mid X)\bigr).
\end{equation*}
If $q_\alpha(F\mid X)>Y$, then
\begin{equation*}
    \bigl(\alpha-\mathbb 1\{Y<\min(q_\alpha(F\mid X),C)\}\bigr)
    \bigl(Y-\min(q_\alpha(F\mid X),C)\bigr)
    =
    (1-\alpha)\bigl(\min(q_\alpha(F\mid X),C)-Y\bigr),
\end{equation*}
and hence
\begin{align*}
    \bar S_{\mathrm{Q},\alpha}(F;Y,1,X)
    &=
    \frac{1-\alpha}{G(Y^-\mid X)}
    \mathbb E\!\left[
    \bigl(\min(q_\alpha(F\mid X),C)-Y\bigr)\mathbb 1\{C\ge Y\}
    \mid X
    \right] \\
    &=
    \frac{1-\alpha}{G(Y^-\mid X)}
    \int_Y^{q_\alpha(F\mid X)} G(t\mid X)\,dt.
\end{align*}
Combining the two cases yields
\begin{equation}
    \bar S_{\mathrm{Q},\alpha}(F;Y,\Delta,X)
    =
    \alpha\bigl(Y-q_\alpha(F\mid X)\bigr)_+
    +
    \Delta\,\frac{1-\alpha}{G(Y^-\mid X)}
    \int_Y^{q_\alpha(F\mid X)} G(t\mid X)\,dt,
    \label{eq:marginalized-quantile-right}
\end{equation}
with the convention that the integral is zero whenever $q_\alpha(X)\le Y$.

\section{Additional experimental details and results}
\label{app:additional-experimental-results}

\subsection{Compute resources}
\label{app:compute-resources}
All synthetic experiments were run on a machine with an Apple M1 Pro chip. The AKI use case was run on a Tesla V100-SXM3-32GB GPU with 32 GB of memory.

We note that the computational requirements are primarily driven by training the neural-network models. They therefore depend mainly on the dataset size and on the size and type of the chosen architecture, rather than on the use of the proposed loss functions or proper scoring rules per se.

\subsection{Forecast-evaluation simulation setup}
\label{app:forecast-evaluation-setup}

This subsection gives the exact simulation design for the forecast-evaluation experiment reported in Section~\ref{sec:forecast-evaluation}. It specifies the latent event-time law, candidate forecasts, censoring regimes, discretization grid, and scoring rules used in the comparison.

\paragraph{Latent event-time model and grid.}
We simulate covariates $X=(X_1,X_2,X_3)\sim N(0,I_3)$ and latent event times from
\begin{equation*}
    T\mid X=x \sim \mathrm{Weibull}(k,\lambda(x)),
    \qquad
    k=1.5,
    \qquad
    \log \lambda(x)=0.3+0.8x_1-0.5x_2+0.3x_3 .
\end{equation*}
All reported results are based on a test sample of size $N=1000$. To match the discrete forecast-based scores, we represent forecasts on a grid
\begin{equation*}
    0=\zeta_0<\zeta_1<\cdots<\zeta_B=z_{\max},
    \qquad B=50,
\end{equation*}
where $z_{\max}=20.5471$ is the empirical $0.995$-quantile of a large pilot sample from the oracle distribution. We also evaluate continuous versions of our CRPS, pinball, Brier, and log scores before discretization when a continuous forecast representation is available; however, for conciseness, we omit them from Table \ref{tab:oracle-ranks}, as they have the same ranking as their discrete counterparts. Note that their scores are lower, since the data-generating process is continuous and the discretization is an approximation. 

\paragraph{Censoring regimes.}
We consider four regimes.

\emph{Regime A: administrative censoring.}
The censoring time is deterministic, $C\equiv c_{\mathrm{admin}}$, with $c_{\mathrm{admin}}=0.9833$ in the realized simulation, giving an event rate of $50.00\%$. Because the censoring time is known exactly, we evaluate this regime using localized scores, as in Section~\ref{sec:admin-cens}.

\emph{Regime B: random independent censoring.}
The censoring time is sampled independently of $T$ and $X$:
\begin{equation*}
    C\sim \mathrm{Uniform}(0,C_{\max}),
    \qquad
    C_{\max}=0.4z_{\max}=8.2188 .
\end{equation*}
The event rate is $79.70\%$. Since the censoring law is marginal and forecast-independent, we evaluate this regime with marginalized scores using the true marginal censoring distribution.

\emph{Regime C: conditionally independent censoring.}
The censoring time depends on covariates but remains conditionally independent of \(T\):
\begin{equation*}
    C\mid X=x \sim \mathrm{Weibull}(k,\mu_C(x)),
    \qquad
    k=1.5,
    \qquad
    \log \mu_C(x)=0.2-0.3x_1+0.4x_3 .
\end{equation*}
The event rate is $48.60\%$. We evaluate this regime using marginalized scores with the true conditional censoring law.

\emph{Regime D: targeted forecast-dependent weights stress test.}
Let $j=\lfloor B/2\rfloor=25$, so that $\zeta_{j-1}=9.8626$ and $\zeta_j=10.2736$. Define
\begin{equation*}
    a_j=\zeta_{j-1}+0.25(\zeta_j-\zeta_{j-1})=9.9653,
    \qquad
    b_j=\zeta_{j-1}+0.50(\zeta_j-\zeta_{j-1})=10.0681 .
\end{equation*}
The oracle event distribution places probability $\frac12$ on $(b_j,\zeta_j]$ and probability $\frac12$ on $(\zeta_{B-1},\zeta_B]$, where $\zeta_{B-1}=20.1362 $ and $\zeta_B=20.5471$. Censoring is generated by a mixture: with probability $0.6$, $C\sim\mathrm{Uniform}(\zeta_{j-1},a_j)$, so censoring occurs before the interior event mass; with probability $0.4$, $C$ is set to a late administrative value beyond $z_{\max}$, so the event is observed. The resulting event rate is $39.50\%$. This regime is designed to allow forecast-dependent completion to be gamed by concentrating mass in the far tail.

\paragraph{Candidate forecasts.}
In regimes A--C, we compare five forecasts. $F_0$ is the oracle Weibull forecast. $F_1$ is a shifted Weibull forecast with $\lambda_1(x)=e^{0.25}\lambda(x)$. $F_2$ is an overdispersed and right-shifted discrete forecast obtained by smoothing the oracle pmf with a symmetric kernel proportional to $(1,2,3,2,1)$ and shifting mass two bins to the right. $F_3$ is underdispersed and right-shifted, constructed by blending the oracle pmf with a narrow peaked template centered two bins to the right of the oracle median bin, using width $1.25$ and mixing weight $0.70$. $F_4$ is upper-tail-heavy, with $p_{4,i}(x)\propto p_{0,i}(x)\exp(i/B)$.

Regime D uses a targeted stress-test construction. There, the oracle $F_0^{(D)}$ places half its mass in an interior bin interval and half its mass in the last grid bin. It is compared with exploit forecasts $F_5(\epsilon)$, with $\epsilon\in\{10^{-3},5\times 10^{-3},10^{-2},5\times 10^{-2}\}$, that place only $\epsilon$ mass in the interior event interval and nearly all remaining mass in the final bin, up to a small numerical probability floor.

\paragraph{Scoring rules and evaluation protocol.}
As infeasible gold standards, we evaluate latent CRPS, pinball, Brier, and negative log-likelihood scores with respect to the unobserved true event times. On censored data, we evaluate the forecast-independent scores proposed in this paper: localized scores in regime A and marginalized scores in regimes B--D. 

For direct comparison with \citet{yanagisawa_proper_2023}, we thus evaluate their right-censored forecast-dependent weighted log score, multiclass Brier score, binary Brier score, pinball score, and ranked probability score (RPS). Each is evaluated either with oracle weights computed from $F_0$, or with plug-in weights computed from the forecast being scored. The binary Brier score is averaged over event-time percentiles, using the targeted grid threshold in regime D.

\subsection{Raw score values for forecast-evaluation simulations}
\label{app:forecast-evaluation-score-values}

The main text reports oracle ranks to summarize whether each scoring rule identifies the oracle forecast as best. 
Here we report the corresponding raw mean score values. 
For regimes A--C, Table~\ref{tab:oracle-score-values-abc} reports the oracle score together with the best competing forecast score in parentheses. 
For regime D, Table~\ref{tab:oracle-score-values-d} reports the oracle score and the best exploit forecast score separately. 
Lower values are better in all tables. 
Bold entries indicate ranking reversals, where a non-oracle forecast obtains a lower score than the oracle.

\begin{table}[t]
\centering

\begin{minipage}[t]{0.64\textwidth}
\vspace{0pt}
\centering
\scriptsize
\setlength{\tabcolsep}{3.5pt}
\renewcommand{\arraystretch}{0.95}
\caption{
Raw mean scores for regimes A--C. 
Each entry gives the oracle forecast score, with the best competing forecast score in parentheses. 
Lower is better. 
Bold entries indicate ranking reversals, where the best competing forecast obtains a lower score than the oracle.
}
\label{tab:oracle-score-values-abc}
\begin{adjustbox}{max width=\linewidth}
\begin{tabular}{lccc}
\toprule
Scoring rule & A & B & C \\
\midrule
CRPS, latent & $0.7766\;(0.7905)$ & $0.7766\;(0.7905)$ & $0.7766\;(0.7905)$ \\
Pinball, latent & $0.3030\;(0.3057)$ & $0.3030\;(0.3057)$ & $0.3030\;(0.3057)$ \\
Brier, latent & $0.1016\;(0.1021)$ & $0.1016\;(0.1021)$ & $0.1016\;(0.1021)$ \\
NLL, latent & $1.0745\;(1.1389)$ & $1.0745\;(1.1389)$ & $1.0745\;(1.1389)$ \\
\midrule
NLL \citep{yanagisawa_proper_2023}, plug-in 
& $\mathbf{0.9841\;(0.9800)}$ & $1.5163\;(1.5171)$ & $\mathbf{0.9086\;(0.9056)}$ \\
Brier MC \citep{yanagisawa_proper_2023}, plug-in 
& $0.7721\;(0.7735)$ & $0.7689\;(0.7694)$ & $0.7676\;(0.7690)$ \\
Brier \citep{yanagisawa_proper_2023}, plug-in 
& $0.1041\;(0.1045)$ & $0.1034\;(0.1040)$ & $\mathbf{0.1020\;(0.1019)}$ \\
Pinball \citep{yanagisawa_proper_2023}, plug-in 
& $0.2947\;(0.3400)$ & $0.3286\;(0.3477)$ & $0.3935\;(0.4197)$ \\
RPS \citep{yanagisawa_proper_2023}, plug-in 
& $\mathbf{1.6839\;(1.4671)}$ & $1.6877\;(1.7565)$ & $\mathbf{1.6683\;(1.4206)}$ \\
\midrule
NLL \citep{yanagisawa_proper_2023}, oracle 
& $0.9841\;(0.9856)$ & $1.5163\;(1.5193)$ & $0.9086\;(0.9099)$ \\
Brier MC \citep{yanagisawa_proper_2023}, oracle 
& $0.7721\;(0.7726)$ & $0.7689\;(0.7695)$ & $0.7676\;(0.7680)$ \\
Brier \citep{yanagisawa_proper_2023}, oracle 
& $0.1041\;(0.1045)$ & $0.1034\;(0.1039)$ & $0.1020\;(0.1023)$ \\
Pinball \citep{yanagisawa_proper_2023}, oracle 
& $0.2947\;(0.3400)$ & $0.3286\;(0.3448)$ & $0.3935\;(0.4178)$ \\
RPS \citep{yanagisawa_proper_2023}, oracle 
& $1.6839\;(1.7096)$ & $1.6877\;(1.7098)$ & $1.6683\;(1.6938)$ \\
\midrule
Censored NLL & $0.3811\;(0.4036)$ & $0.7350\;(0.7702)$ & $0.3445\;(0.3688)$ \\
CRPS, ours & $0.1129\;(0.1133)$ & $0.3729\;(0.3753)$ & $0.1268\;(0.1278)$ \\
Pinball, ours & $0.0416\;(0.0417)$ & $0.1418\;(0.1429)$ & $0.0473\;(0.0476)$ \\
Brier, ours & $0.0457\;(0.0460)$ & $0.0873\;(0.0880)$ & $0.0494\;(0.0497)$ \\
\bottomrule
\end{tabular}
\end{adjustbox}
\vspace{-1mm}
\end{minipage}
\hfill
\begin{minipage}[t]{0.34\textwidth}
\vspace{0pt}
\centering
\scriptsize
\setlength{\tabcolsep}{4pt}
\renewcommand{\arraystretch}{0.95}
\caption{
Raw mean scores for regime D. 
Entries report the oracle score and the best exploit forecast score. 
Lower is better. 
Bold entries indicate ranking reversals, where an exploit forecast obtains a lower score than the oracle.
}
\label{tab:oracle-score-values-d}
\begin{adjustbox}{max width=\linewidth}
\begin{tabular}{lcc}
\toprule
Scoring rule & Oracle score & Best exploit score \\
\midrule
CRPS, latent & $2.6189$ & $4.8200$ \\
Pinball, latent & $0.9320$ & $1.8684$ \\
Brier, latent & $0.0250$ & $0.0455$ \\
NLL, latent & $-0.5448$ & $0.2944$ \\
\midrule
NLL \citep{yanagisawa_proper_2023}, plug-in 
& $\mathbf{0.6931}$ & $\mathbf{0.6716}$ \\
Brier MC \citep{yanagisawa_proper_2023}, plug-in 
& $\mathbf{0.5000}$ & $\mathbf{0.3683}$ \\
Brier \citep{yanagisawa_proper_2023}, plug-in 
& $\mathbf{0.0250}$ & $\mathbf{0.0184}$ \\
Pinball \citep{yanagisawa_proper_2023}, plug-in 
& $1.1289$ & $1.4673$ \\
RPS \citep{yanagisawa_proper_2023}, plug-in 
& $\mathbf{6.2500}$ & $\mathbf{4.6039}$ \\
\midrule
NLL \citep{yanagisawa_proper_2023}, oracle 
& $0.6931$ & $1.4235$ \\
Brier MC \citep{yanagisawa_proper_2023}, oracle 
& $0.5000$ & $0.8438$ \\
Brier \citep{yanagisawa_proper_2023}, oracle 
& $0.0250$ & $0.0422$ \\
Pinball \citep{yanagisawa_proper_2023}, oracle 
& $1.1289$ & $1.5317$ \\
RPS \citep{yanagisawa_proper_2023}, oracle 
& $6.2500$ & $10.5473$ \\
\midrule
Censored NLL & $-0.2050$ & $0.0832$ \\
CRPS, ours & $1.0195$ & $1.6975$ \\
Pinball, ours & $0.3741$ & $0.6843$ \\
Brier, ours & $0.0099$ & $0.0167$ \\
\bottomrule
\end{tabular}
\end{adjustbox}
\vspace{-1mm}
\end{minipage}

\end{table}

\subsection{Sensitivity to estimating the censoring distribution}
\label{app:censoring-estimation-sensitivity}

We use the same synthetic setup as in Section~\ref{sec:forecast-evaluation}, with the exact simulation details given in Appendix~\ref{app:forecast-evaluation-setup}, but now replace the true censoring law in regimes B and C by estimated censoring distributions. The estimators are fit by treating censoring as the event. We consider a Kaplan--Meier estimator and a pooled Weibull model. In regime B, these pooled estimators are well matched to the data-generating process because censoring is independent of both $T$ and $X$. In regime C, they are intentionally misspecified because they ignore the covariate dependence of $C\mid X$.

Table~\ref{tab:estimated-censoring-scores} reports the resulting raw score values for our marginalized CRPS, pinball, and Brier scores, using both discrete and continuous implementations where available. The rank summary is as follows. In regime B, both the Kaplan--Meier and pooled Weibull censoring estimators preserve the oracle ordering for all three discrete scores. In regime C, the pooled Kaplan--Meier estimator also ranks the oracle first for all three discrete scores. 
The pooled Weibull estimator preserves the oracle ranking for discrete CRPS and pinball, but the discrete Brier score ranks the near-oracle tail forecast slightly ahead of the oracle. This reflects ordinary censoring-model misspecification rather than forecast-dependent circularity: once the censoring estimator is fixed, all candidate forecasts are still evaluated by the same forecast-independent score.

\begin{table*}[h]
\centering
\small
\setlength{\tabcolsep}{3.5pt}
\renewcommand{\arraystretch}{1.08}
\caption{
Sensitivity of our scores to estimated censoring distributions. 
Entries are mean scores; lower is better. 
Regime B uses censoring independent of both \(T\) and \(X\). 
Regime C uses covariate-dependent censoring, but the censoring estimator is pooled and therefore ignores the covariate dependence. 
}
\label{tab:estimated-censoring-scores}
\begin{adjustbox}{max width=\textwidth}
\begin{tabular}{llcccccc}
\toprule
Regime / estimator & Forecast 
& CRPS disc. & CRPS cont. 
& Pinball disc. & Pinball cont. 
& Brier disc. & Brier cont. \\
\midrule

\multicolumn{8}{l}{\emph{Regime B, KM censoring estimator}} \\
& \(F_0\) oracle  & 0.4326 & 0.3748 & 0.1637 & 0.1420 & 0.1023 & 0.0875 \\
& \(F_1\) shifted & 0.4863 & 0.4002 & 0.1827 & 0.1524 & 0.1093 & 0.0926 \\
& \(F_2\) over    & 0.7960 & 0.6497 & 0.2943 & 0.2427 & 0.2072 & 0.1819 \\
& \(F_3\) under   & 0.6281 & 0.5095 & 0.2346 & 0.1931 & 0.1470 & 0.1269 \\
& \(F_4\) tail    & 0.4405 & 0.3766 & 0.1666 & 0.1432 & 0.1031 & 0.0881 \\

\midrule
\multicolumn{8}{l}{\emph{Regime B, Weibull censoring estimator}} \\
& \(F_0\) oracle  & 0.4326 & 0.3752 & 0.1640 & 0.1424 & 0.1028 & 0.0872 \\
& \(F_1\) shifted & 0.4874 & 0.4037 & 0.1837 & 0.1535 & 0.1096 & 0.0922 \\
& \(F_2\) over    & 0.7926 & 0.6456 & 0.2935 & 0.2416 & 0.2070 & 0.1808 \\
& \(F_3\) under   & 0.6248 & 0.5061 & 0.2333 & 0.1919 & 0.1472 & 0.1261 \\
& \(F_4\) tail    & 0.4416 & 0.3786 & 0.1674 & 0.1441 & 0.1035 & 0.0878 \\

\midrule
\multicolumn{8}{l}{\emph{Regime C, KM censoring estimator, pooled}} \\
& \(F_0\) oracle  & 0.2402 & 0.1295 & 0.0891 & 0.0482 & 0.1023 & 0.0496 \\
& \(F_1\) shifted & 0.2618 & 0.1365 & 0.0965 & 0.0510 & 0.1084 & 0.0516 \\
& \(F_2\) over    & 0.4316 & 0.2708 & 0.1578 & 0.0998 & 0.2099 & 0.1073 \\
& \(F_3\) under   & 0.3301 & 0.1845 & 0.1198 & 0.0679 & 0.1466 & 0.0716 \\
& \(F_4\) tail    & 0.2432 & 0.1303 & 0.0901 & 0.0486 & 0.1028 & 0.0499 \\

\midrule
\multicolumn{8}{l}{\emph{Regime C, Weibull censoring estimator, pooled}} \\
& \(F_0\) oracle  & 0.2399 & 0.1281 & 0.0888 & 0.0474 & 0.1279 & 0.0496 \\
& \(F_1\) shifted & 0.2604 & 0.1334 & 0.0956 & 0.0495 & 0.1285 & 0.0517 \\
& \(F_2\) over    & 0.4352 & 0.2701 & 0.1587 & 0.0991 & 0.2300 & 0.1080 \\
& \(F_3\) under   & 0.3317 & 0.1838 & 0.1200 & 0.0674 & 0.1772 & 0.0720 \\
& \(F_4\) tail    & 0.2424 & 0.1288 & 0.0896 & 0.0476 & 0.1273 & 0.0499 \\

\bottomrule
\end{tabular}
\end{adjustbox}
\end{table*}

\subsection{Synthetic censored-engression setup}
\label{app:synthetic-censored-engression}

This subsection gives the data-generating processes and implementation details for the synthetic multivariate censored-engression experiments in Section~\ref{sec:synthetic-censored-engression}. 
The experiments use simulated data so that the latent event times and the true data-generating conditional law are available for oracle evaluation.

\paragraph{Covariates.}
For each observation $i$, we draw
\[
    X_i=(X_{i1},X_{i2},X_{i3},X_{i4})^\top \sim N(0,I_4).
\]
The latent event-time vector has dimension $k$, where $k=2$ for the bivariate experiments and $k\in\{3,5,10\}$ for the scaling experiments.

\paragraph{Base coefficient functions.}
For both latent DGPs, we use the same covariate-dependent location and scale functions. 
Let
\[
    b = (-0.6,-0.2,0.2,0.6)^\top .
\]
For event coordinate $j=1,\ldots,k$, define
\[
    \mu_j(x)=0.35 + x^\top \beta_j,
    \qquad
    \sigma_j(x)=0.35 + 0.15\,\mathrm{sigmoid}(x^\top \gamma_j),
\]
where
\[
    \beta_j
    =
    (0.35 + 0.08(j-1))
    \bigl(
    \cos(b_1+0.7(j-1)),\ldots,\cos(b_4+0.7(j-1))
    \bigr)^\top ,
\]
and
\[
    \gamma_j
    =
    0.10
    \bigl(
    \sin(b_1-0.7(j-1)),\ldots,\sin(b_4-0.7(j-1))
    \bigr)^\top .
\]

\paragraph{Unimodal log-normal DGP.}
The original, easier DGP is a correlated log-normal model. 
Let $\varepsilon_i\sim N(0,\Sigma_\rho)$, where $\Sigma_\rho$ is the $k\times k$ equicorrelation matrix with diagonal entries $1$ and off-diagonal entries $\rho=0.45$. 
Conditional on $X_i=x$, the latent log-times are
\[
    \log T_{ij}
    =
    \mu_j(x)+\sigma_j(x)\varepsilon_{ij},
    \qquad j=1,\ldots,k.
\]
We use this DGP only as a sanity check in the appendix, with $k=2$.

\paragraph{Mixture log-normal DGP.}
The main experiments use a covariate-dependent two-regime mixture log-normal DGP. 
Let
\[
    \pi(x)
    =
    \mathrm{sigmoid}
    \left(
    0.8x_1 - 0.6x_2 + 0.4\sin(x_3) + 0.25(x_4^2-1)
    \right),
\]
and draw a latent regime indicator
\[
    Z_i\mid X_i=x \sim \mathrm{Bernoulli}(\pi(x)).
\]
The two regimes correspond to an early and a late event-time mode. 
Let $\ell_j$ be event-coordinate-specific loadings, evenly spaced between $0.85$ and $1.15$:
\[
    \ell_j = 0.85 + \frac{j-1}{k-1}(1.15-0.85),
    \qquad j=1,\ldots,k,
\]
with $\ell_1=1$ when $k=1$. 
The regime shift is
\[
    a_j = 1.15\,\ell_j.
\]

Let $\varepsilon_i^{(0)}\sim N(0,\Sigma_{0})$ and $\varepsilon_i^{(1)}\sim N(0,\Sigma_{1})$, independently, where $\Sigma_0$ and $\Sigma_1$ are $k\times k$ equicorrelation matrices with off-diagonal correlations $0.70$ and $0.20$, respectively. 
The early regime has stronger dependence and smaller marginal scale, while the late regime has weaker dependence and larger marginal scale:
\[
    \log T_{ij}^{(0)}
    =
    \mu_j(x) - a_j + 0.85\,\sigma_j(x)\varepsilon_{ij}^{(0)},
\]
\[
    \log T_{ij}^{(1)}
    =
    \mu_j(x) + a_j + 1.15\,\sigma_j(x)\varepsilon_{ij}^{(1)}.
\]
The observed latent event time is then
\[
    \log T_{ij}
    =
    (1-Z_i)\log T_{ij}^{(0)}
    +
    Z_i\log T_{ij}^{(1)}.
\]
This DGP induces a nonlinear, covariate-dependent, multimodal conditional distribution and is intentionally outside the Weibull-copula family used by the parametric likelihood baselines.

\paragraph{Censoring mechanisms.}
A single scalar censoring time $C_i$ is shared by all event coordinates. 
The observed outcomes are
\[
    Y_{ij}=\min(T_{ij},C_i),
    \qquad
    \Delta_{ij}=\mathbb{1}\{T_{ij}\le C_i\},
    \qquad j=1,\ldots,k.
\]
Thus, censoring is perfectly dependent across coordinates through the shared censoring variable $C_i$.

We consider three censoring mechanisms:
\begin{itemize}
    \item \emph{Administrative censoring:}
    \[
        C \equiv c_{\mathrm{admin}} = 3.0.
    \]

    \item \emph{Random independent censoring:}
    \[
        C \sim \mathrm{Uniform}(0,c_{\max}),
        \qquad
        c_{\max}=5.0.
    \]

    \item \emph{Conditionally independent censoring:}
    \[
        C\mid X=x \sim \mathrm{Uniform}(0,c_{\max}(x)),
    \]
    with
    \[
        c_{\max}(x)
        =
        5.0
        \left(
        0.45
        +
        0.70\,\mathrm{sigmoid}(0.45x_1-0.35x_2+0.20x_3)
        \right).
    \]
\end{itemize}
The administrative regime corresponds to fixed localized censoring. 
The two random censoring regimes require the marginalized censored-energy objective because, for uncensored observations, the realized censoring time is only known to exceed the largest observed event time.

\paragraph{Methods and baselines.}
We compare censored engression with oracle references, naive observed engression, and likelihood-based survival baselines. 
The \emph{DGP} row evaluates samples from the true conditional law and is not a fitted method. 
\emph{Latent engression} is an infeasible benchmark trained directly on latent event times $T$. 
\emph{Naive observed engression} trains the same generator architecture on $Y$ as if it were fully observed. 
\emph{Censored engression} trains the generator with the localized or marginalized censored energy score.

The likelihood-based baselines are fitted only on $(X,Y,\Delta)$ using the censored logarithmic score. 
We include independent Weibull, conditional MLP independent Weibull, Clayton copula Weibull, and conditional MLP Clayton copula Weibull baselines. 
For $k=2$, we additionally include Gumbel copula Weibull and joint discrete-grid likelihood baselines. 
The latter is included only in low dimension because its output size scales as $O(B^k)$ for $B$ grid bins.

\paragraph{Training setup.}
All engression models use the same noise-conditioned MLP generator with hidden width $128$, noise dimension $128$, depth $2$, and a softplus output transform. 
Training, validation, and test sizes are $4000$, $1000$, and $1000$, respectively. 
We report means and standard errors over five independent repetitions.

Engression models are trained for at most $200$ epochs with early stopping. 
For censored engression, we use $16$ latent samples per minibatch update. 
For evaluation, we use $1024$ latent samples. 
In the random censoring regimes, the marginalized score is approximated with $8$ censoring draws during training and $512$ censoring draws during evaluation.

The likelihood baselines are trained by minimizing their observed-data censored negative log-likelihood. 
Validation negative log-likelihood is used for early stopping. 
For the conditional MLP Weibull and conditional MLP Clayton baselines, we use hidden width $128$, depth $2$, batch size $512$, and AdamW optimization.

% \subsection{Synthetic censored-engression results}
\label{app:synthetic-censored-engression-results}

\begin{table}[t]
\centering
\scriptsize
\setlength{\tabcolsep}{3.5pt}
\renewcommand{\arraystretch}{0.95}
\caption{Synthetic multivariate survival experiment under a covariate-dependent mixture log-normal latent DGP with $k=2$ event times and one shared censoring time. Entries are mean $\pm$ standard error over five independent repetitions. Lower is better. \emph{DGP} denotes the true data-generating conditional law and is therefore an oracle reference, not a fitted method. \emph{Latent engression} is the infeasible benchmark trained directly on latent event times. Among feasible methods, the best result in each censoring regime is shown in bold.}
\label{tab:synthetic-censored-engression-mixture-k2}
\begin{tabular}{llcc}
\toprule
Censoring & Method & Censored ES & Latent ES \\
\midrule
Administrative & DGP & $0.6332 \pm 0.0061$ & $1.2699 \pm 0.0183$ \\
Administrative & Latent engression & $0.6424 \pm 0.0055$ & $1.2862 \pm 0.0182$ \\
\cmidrule(lr){1-4}
Administrative & Naive observed engression & $0.6444 \pm 0.0053$ & $1.3941 \pm 0.0180$ \\
Administrative & Censored engression & $\mathbf{0.6401 \pm 0.0053}$ & $\mathbf{1.3237 \pm 0.0193}$ \\
Administrative & Independent Weibull & $0.7850 \pm 0.0033$ & $1.4765 \pm 0.0165$ \\
Administrative & MLP independent Weibull & $0.7029 \pm 0.0044$ & $1.3772 \pm 0.0175$ \\
Administrative & Clayton--Weibull & $0.7586 \pm 0.0027$ & $1.4515 \pm 0.0175$ \\
Administrative & MLP Clayton--Weibull & $0.6745 \pm 0.0055$ & $1.3388 \pm 0.0190$ \\
Administrative & Gumbel--Weibull & $0.7567 \pm 0.0025$ & $1.4509 \pm 0.0172$ \\
Administrative & Joint grid likelihood & $0.7143 \pm 0.0069$ & $1.4761 \pm 0.0156$ \\
\midrule
Uniform & DGP & $0.4840 \pm 0.0067$ & $1.2468 \pm 0.0168$ \\
Uniform & Latent engression & $0.4907 \pm 0.0061$ & $1.2601 \pm 0.0169$ \\
\cmidrule(lr){1-4}
Uniform & Naive observed engression & $0.5524 \pm 0.0125$ & $1.4440 \pm 0.0155$ \\
Uniform & Censored engression & $\mathbf{0.4904 \pm 0.0073}$ & $\mathbf{1.2887 \pm 0.0207}$ \\
Uniform & Independent Weibull & $0.5975 \pm 0.0071$ & $1.4703 \pm 0.0198$ \\
Uniform & MLP independent Weibull & $0.5333 \pm 0.0054$ & $1.3532 \pm 0.0208$ \\
Uniform & Clayton--Weibull & $0.5781 \pm 0.0070$ & $1.4426 \pm 0.0200$ \\
Uniform & MLP Clayton--Weibull & $0.5152 \pm 0.0070$ & $1.3233 \pm 0.0207$ \\
Uniform & Gumbel--Weibull & $0.5773 \pm 0.0071$ & $1.4409 \pm 0.0194$ \\
Uniform & Joint grid likelihood & $0.5245 \pm 0.0091$ & $1.3586 \pm 0.0183$ \\
\midrule
Conditional uniform & DGP & $0.3811 \pm 0.0053$ & $1.2468 \pm 0.0168$ \\
Conditional uniform & Latent engression & $0.3868 \pm 0.0049$ & $1.2601 \pm 0.0169$ \\
\cmidrule(lr){1-4}
Conditional uniform & Naive observed engression & $0.4553 \pm 0.0104$ & $1.5009 \pm 0.0199$ \\
Conditional uniform & Censored engression & $\mathbf{0.3871 \pm 0.0055}$ & $\mathbf{1.3088 \pm 0.0236}$ \\
Conditional uniform & Independent Weibull & $0.4774 \pm 0.0056$ & $1.4879 \pm 0.0214$ \\
Conditional uniform & MLP independent Weibull & $0.4178 \pm 0.0048$ & $1.3698 \pm 0.0217$ \\
Conditional uniform & Clayton--Weibull & $0.4633 \pm 0.0053$ & $1.4640 \pm 0.0206$ \\
Conditional uniform & MLP Clayton--Weibull & $0.4051 \pm 0.0057$ & $1.3399 \pm 0.0198$ \\
Conditional uniform & Gumbel--Weibull & $0.4620 \pm 0.0053$ & $1.4628 \pm 0.0209$ \\
Conditional uniform & Joint grid likelihood & $0.4124 \pm 0.0066$ & $1.3582 \pm 0.0219$ \\
\bottomrule
\end{tabular}
\vspace{-1mm}
\end{table}
\begin{table}[t]
\centering
\scriptsize
\setlength{\tabcolsep}{3.5pt}
\renewcommand{\arraystretch}{0.95}
\caption{Synthetic multivariate survival experiment under a covariate-dependent mixture log-normal latent DGP with $k=3$ event times and one shared censoring time. Entries are mean $\pm$ standard error over five independent repetitions. Lower is better. \emph{DGP} denotes the true data-generating conditional law and is therefore an oracle reference, not a fitted method. \emph{Latent engression} is the infeasible benchmark trained directly on latent event times. Among feasible methods, the best result in each censoring regime is shown in bold.}
\label{tab:synthetic-censored-engression-mixture-k3}
\begin{tabular}{llcc}
\toprule
Censoring & Method & Censored ES & Latent ES \\
\midrule
Administrative & DGP & $0.9189 \pm 0.0055$ & $3.2238 \pm 0.0389$ \\
Administrative & Latent engression & $0.9457 \pm 0.0044$ & $3.2923 \pm 0.0416$ \\
\cmidrule(lr){1-4}
Administrative & Naive observed engression & $0.9486 \pm 0.0050$ & $4.5247 \pm 0.0667$ \\
Administrative & Censored engression & $\mathbf{0.9434 \pm 0.0039}$ & $\mathbf{3.7475 \pm 0.0427}$ \\
Administrative & Independent Weibull & $1.1043 \pm 0.0067$ & $3.8241 \pm 0.0559$ \\
Administrative & MLP independent Weibull & $1.0809 \pm 0.0056$ & $4.0990 \pm 0.0955$ \\
Administrative & Clayton--Weibull & $0.9905 \pm 0.0045$ & $3.6364 \pm 0.0520$ \\
Administrative & MLP Clayton--Weibull & $0.9817 \pm 0.0019$ & $3.5207 \pm 0.0361$ \\
\midrule
Uniform & DGP & $0.7451 \pm 0.0073$ & $3.1883 \pm 0.0268$ \\
Uniform & Latent engression & $0.7651 \pm 0.0059$ & $3.2389 \pm 0.0296$ \\
\cmidrule(lr){1-4}
Uniform & Naive observed engression & $0.9003 \pm 0.0089$ & $4.4224 \pm 0.0810$ \\
Uniform & Censored engression & $\mathbf{0.7584 \pm 0.0071}$ & $3.5878 \pm 0.0608$ \\
Uniform & Independent Weibull & $0.8922 \pm 0.0086$ & $3.6567 \pm 0.0421$ \\
Uniform & MLP independent Weibull & $0.8724 \pm 0.0085$ & $3.8675 \pm 0.1367$ \\
Uniform & Clayton--Weibull & $0.8059 \pm 0.0085$ & $3.4878 \pm 0.0338$ \\
Uniform & MLP Clayton--Weibull & $0.8014 \pm 0.0078$ & $\mathbf{3.4073 \pm 0.0359}$ \\
\midrule
Conditional uniform & DGP & $0.5965 \pm 0.0073$ & $3.1883 \pm 0.0268$ \\
Conditional uniform & Latent engression & $0.6137 \pm 0.0060$ & $3.2389 \pm 0.0296$ \\
\cmidrule(lr){1-4}
Conditional uniform & Naive observed engression & $0.7489 \pm 0.0113$ & $4.5448 \pm 0.0852$ \\
Conditional uniform & Censored engression & $\mathbf{0.6089 \pm 0.0061}$ & $3.6355 \pm 0.0626$ \\
Conditional uniform & Independent Weibull & $0.7155 \pm 0.0087$ & $3.6283 \pm 0.0399$ \\
Conditional uniform & MLP independent Weibull & $0.7006 \pm 0.0086$ & $3.7896 \pm 0.0984$ \\
Conditional uniform & Clayton--Weibull & $0.6463 \pm 0.0084$ & $3.4849 \pm 0.0396$ \\
Conditional uniform & MLP Clayton--Weibull & $0.6416 \pm 0.0080$ & $\mathbf{3.4174 \pm 0.0448}$ \\
\bottomrule
\end{tabular}
\vspace{-1mm}
\end{table}
\begin{table}[t]
\centering
\scriptsize
\setlength{\tabcolsep}{3.5pt}
\renewcommand{\arraystretch}{0.95}
\caption{Synthetic multivariate survival experiment under a covariate-dependent mixture log-normal latent DGP with $k=5$ event times and one shared censoring time. Entries are mean $\pm$ standard error over five independent repetitions. Lower is better. \emph{DGP} denotes the true data-generating conditional law and is therefore an oracle reference, not a fitted method. \emph{Latent engression} is the infeasible benchmark trained directly on latent event times. Among feasible methods, the best result in each censoring regime is shown in bold.}
\label{tab:synthetic-censored-engression-mixture-k5}
\begin{tabular}{llcc}
\toprule
Censoring & Method & Censored ES & Latent ES \\
\midrule
Administrative & DGP & $1.2108 \pm 0.0129$ & $5.3617 \pm 0.0205$ \\
Administrative & Latent engression & $1.3204 \pm 0.0272$ & $5.5121 \pm 0.0363$ \\
\cmidrule(lr){1-4}
Administrative & Naive observed engression & $1.2445 \pm 0.0129$ & $7.9679 \pm 0.0529$ \\
Administrative & Censored engression & $\mathbf{1.2258 \pm 0.0121}$ & $7.1165 \pm 0.0267$ \\
Administrative & Independent Weibull & $1.4678 \pm 0.0119$ & $6.1825 \pm 0.0446$ \\
Administrative & MLP independent Weibull & $1.4394 \pm 0.0112$ & $6.6426 \pm 0.1223$ \\
Administrative & Clayton--Weibull & $1.3084 \pm 0.0122$ & $5.9206 \pm 0.0317$ \\
Administrative & MLP Clayton--Weibull & $1.3140 \pm 0.0167$ & $\mathbf{5.8265 \pm 0.0393}$ \\
\midrule
Uniform & DGP & $0.9536 \pm 0.0100$ & $5.1394 \pm 0.0078$ \\
Uniform & Latent engression & $1.0477 \pm 0.0204$ & $5.3048 \pm 0.0275$ \\
\cmidrule(lr){1-4}
Uniform & Naive observed engression & $1.1433 \pm 0.0070$ & $7.6696 \pm 0.0984$ \\
Uniform & Censored engression & $\mathbf{0.9692 \pm 0.0110}$ & $6.6095 \pm 0.0641$ \\
Uniform & Independent Weibull & $1.1529 \pm 0.0132$ & $5.9272 \pm 0.0301$ \\
Uniform & MLP independent Weibull & $1.1326 \pm 0.0132$ & $6.1641 \pm 0.0837$ \\
Uniform & Clayton--Weibull & $1.0356 \pm 0.0117$ & $5.6937 \pm 0.0365$ \\
Uniform & MLP Clayton--Weibull & $1.0441 \pm 0.0120$ & $\mathbf{5.5976 \pm 0.0432}$ \\
\midrule
Conditional uniform & DGP & $0.7638 \pm 0.0105$ & $5.1394 \pm 0.0078$ \\
Conditional uniform & Latent engression & $0.8536 \pm 0.0192$ & $5.3048 \pm 0.0275$ \\
\cmidrule(lr){1-4}
Conditional uniform & Naive observed engression & $0.9440 \pm 0.0064$ & $7.8476 \pm 0.0988$ \\
Conditional uniform & Censored engression & $\mathbf{0.7754 \pm 0.0117}$ & $6.7030 \pm 0.0690$ \\
Conditional uniform & Independent Weibull & $0.9230 \pm 0.0137$ & $5.9580 \pm 0.0362$ \\
Conditional uniform & MLP independent Weibull & $0.9022 \pm 0.0137$ & $6.2551 \pm 0.0646$ \\
Conditional uniform & Clayton--Weibull & $0.8308 \pm 0.0124$ & $5.7525 \pm 0.0428$ \\
Conditional uniform & MLP Clayton--Weibull & $0.8392 \pm 0.0148$ & $\mathbf{5.6475 \pm 0.0527}$ \\
\bottomrule
\end{tabular}
\vspace{-1mm}
\end{table}
\begin{table}[t]
\centering
\scriptsize
\setlength{\tabcolsep}{3.5pt}
\renewcommand{\arraystretch}{0.95}
\caption{Synthetic multivariate survival experiment under a covariate-dependent mixture log-normal latent DGP with $k=10$ event times and one shared censoring time. Entries are mean $\pm$ standard error over five independent repetitions. Lower is better. \emph{DGP} denotes the true data-generating conditional law and is therefore an oracle reference, not a fitted method. \emph{Latent engression} is the infeasible benchmark trained directly on latent event times. Among feasible methods, the best result in each censoring regime is shown in bold.}
\label{tab:synthetic-censored-engression-mixture-k10}
\begin{tabular}{llcc}
\toprule
Censoring & Method & Censored ES & Latent ES \\
\midrule
Administrative & DGP & $1.6530 \pm 0.0092$ & $9.6954 \pm 0.0843$ \\
Administrative & Latent engression & $2.0170 \pm 0.0403$ & $10.2168 \pm 0.0699$ \\
\cmidrule(lr){1-4}
Administrative & Naive observed engression & $1.7181 \pm 0.0098$ & $17.3182 \pm 0.2313$ \\
Administrative & Censored engression & $\mathbf{1.6884 \pm 0.0107}$ & $15.7127 \pm 0.1847$ \\
Administrative & Independent Weibull & $2.0100 \pm 0.0088$ & $14.9379 \pm 0.6241$ \\
Administrative & MLP independent Weibull & $1.9585 \pm 0.0118$ & $\mathbf{13.3928 \pm 0.2939}$ \\
Administrative & Clayton--Weibull & $1.8086 \pm 0.0074$ & $14.3108 \pm 0.5964$ \\
Administrative & MLP Clayton--Weibull & $1.8793 \pm 0.0186$ & $13.1867 \pm 0.2063$ \\
\midrule
Uniform & DGP & $1.3131 \pm 0.0055$ & $9.4613 \pm 0.0914$ \\
Uniform & Latent engression & $1.6082 \pm 0.0190$ & $10.0111 \pm 0.1553$ \\
\cmidrule(lr){1-4}
Uniform & Naive observed engression & $1.6402 \pm 0.0228$ & $16.5122 \pm 0.3144$ \\
Uniform & Censored engression & $\mathbf{1.3382 \pm 0.0052}$ & $14.2576 \pm 0.3110$ \\
Uniform & Independent Weibull & $1.5901 \pm 0.0086$ & $13.9407 \pm 0.4351$ \\
Uniform & MLP independent Weibull & $1.5491 \pm 0.0066$ & $\mathbf{12.4447 \pm 0.1425}$ \\
Uniform & Clayton--Weibull & $1.4397 \pm 0.0073$ & $13.3828 \pm 0.4909$ \\
Uniform & MLP Clayton--Weibull & $1.4783 \pm 0.0200$ & $12.7285 \pm 0.5428$ \\
\midrule
Conditional uniform & DGP & $1.0562 \pm 0.0067$ & $9.4613 \pm 0.0914$ \\
Conditional uniform & Latent engression & $1.3390 \pm 0.0175$ & $10.0111 \pm 0.1553$ \\
\cmidrule(lr){1-4}
Conditional uniform & Naive observed engression & $1.3623 \pm 0.0180$ & $16.8903 \pm 0.3222$ \\
Conditional uniform & Censored engression & $\mathbf{1.0815 \pm 0.0055}$ & $14.6636 \pm 0.2549$ \\
Conditional uniform & Independent Weibull & $1.2751 \pm 0.0089$ & $13.7644 \pm 0.4093$ \\
Conditional uniform & MLP independent Weibull & $1.2406 \pm 0.0083$ & $\mathbf{12.7779 \pm 0.1484}$ \\
Conditional uniform & Clayton--Weibull & $1.1574 \pm 0.0079$ & $13.2792 \pm 0.4230$ \\
Conditional uniform & MLP Clayton--Weibull & $1.1837 \pm 0.0091$ & $12.5259 \pm 0.4063$ \\
\bottomrule
\end{tabular}
\vspace{-1mm}
\end{table}
\begin{table}[t]
\centering
\scriptsize
\setlength{\tabcolsep}{3.5pt}
\renewcommand{\arraystretch}{0.95}
\caption{Synthetic multivariate survival experiment under a regular log-normal latent DGP with $k=2$ event times and one shared censoring time. Entries are mean $\pm$ standard error over five independent repetitions. Lower is better. \emph{DGP} denotes the true data-generating conditional law and is therefore an oracle reference, not a fitted method. \emph{Latent engression} is the infeasible benchmark trained directly on latent event times. Among feasible methods, the best result in each censoring regime is shown in bold.}
\label{tab:synthetic-censored-engression-lognormal-k2}
\begin{tabular}{llcc}
\toprule
Censoring & Method & Censored ES & Latent ES \\
\midrule
Administrative & DGP & $0.4288 \pm 0.0040$ & $0.7176 \pm 0.0069$ \\
Administrative & Latent engression & $0.4305 \pm 0.0048$ & $0.7196 \pm 0.0063$ \\
\cmidrule(lr){1-4}
Administrative & Naive observed engression & $0.4416 \pm 0.0035$ & $0.8642 \pm 0.0135$ \\
Administrative & Censored engression & $\mathbf{0.4308 \pm 0.0041}$ & $0.7892 \pm 0.0123$ \\
Administrative & Independent Weibull & $0.4365 \pm 0.0039$ & $0.7268 \pm 0.0071$ \\
Administrative & MLP independent Weibull & $0.4380 \pm 0.0049$ & $0.7403 \pm 0.0081$ \\
Administrative & Clayton--Weibull & $0.4329 \pm 0.0038$ & $\mathbf{0.7225 \pm 0.0069}$ \\
Administrative & MLP Clayton--Weibull & $0.4360 \pm 0.0051$ & $0.7299 \pm 0.0091$ \\
Administrative & Gumbel--Weibull & $0.4340 \pm 0.0040$ & $0.7236 \pm 0.0071$ \\
Administrative & Joint grid likelihood & $0.4506 \pm 0.0037$ & $0.9149 \pm 0.0167$ \\
\midrule
Uniform & DGP & $0.3221 \pm 0.0056$ & $0.7175 \pm 0.0117$ \\
Uniform & Latent engression & $0.3238 \pm 0.0060$ & $0.7259 \pm 0.0119$ \\
\cmidrule(lr){1-4}
Uniform & Naive observed engression & $0.3975 \pm 0.0088$ & $0.9709 \pm 0.0211$ \\
Uniform & Censored engression & $0.3257 \pm 0.0055$ & $0.7886 \pm 0.0249$ \\
Uniform & Independent Weibull & $0.3275 \pm 0.0054$ & $0.7259 \pm 0.0127$ \\
Uniform & MLP independent Weibull & $0.3290 \pm 0.0063$ & $0.7421 \pm 0.0138$ \\
Uniform & Clayton--Weibull & $\mathbf{0.3251 \pm 0.0053}$ & $0.7229 \pm 0.0128$ \\
Uniform & MLP Clayton--Weibull & $0.3280 \pm 0.0064$ & $0.7395 \pm 0.0126$ \\
Uniform & Gumbel--Weibull & $0.3257 \pm 0.0055$ & $\mathbf{0.7225 \pm 0.0128}$ \\
Uniform & Joint grid likelihood & $0.3407 \pm 0.0057$ & $0.8191 \pm 0.0204$ \\
\midrule
Conditional uniform & DGP & $0.2672 \pm 0.0055$ & $0.7175 \pm 0.0117$ \\
Conditional uniform & Latent engression & $0.2687 \pm 0.0057$ & $0.7259 \pm 0.0119$ \\
\cmidrule(lr){1-4}
Conditional uniform & Naive observed engression & $0.3557 \pm 0.0098$ & $1.0458 \pm 0.0231$ \\
Conditional uniform & Censored engression & $\mathbf{0.2694 \pm 0.0057}$ & $0.7927 \pm 0.0198$ \\
Conditional uniform & Independent Weibull & $0.2717 \pm 0.0051$ & $0.7266 \pm 0.0128$ \\
Conditional uniform & MLP independent Weibull & $0.2729 \pm 0.0061$ & $0.7504 \pm 0.0147$ \\
Conditional uniform & Clayton--Weibull & $0.2699 \pm 0.0053$ & $0.7235 \pm 0.0125$ \\
Conditional uniform & MLP Clayton--Weibull & $0.2729 \pm 0.0064$ & $0.7478 \pm 0.0132$ \\
Conditional uniform & Gumbel--Weibull & $0.2703 \pm 0.0053$ & $\mathbf{0.7233 \pm 0.0129}$ \\
Conditional uniform & Joint grid likelihood & $0.2798 \pm 0.0057$ & $0.8203 \pm 0.0199$ \\
\bottomrule
\end{tabular}
\vspace{-1mm}
\end{table}

\subsection{Additional AKI use-case setup and results}
\label{app:aki-additional-results}

This subsection describes the ICU AKI use case used in Section~\ref{sec:aki-use-case}. The goal is methodological: we use a realistic, irregularly sampled ICU prediction problem to test whether a censoring-aware distributional objective improves observed-data predictive distributions. We therefore report processed cohort sizes and endpoint rates, but do not include a demographics table.

\begin{table}[ht]
\centering
\caption{KDIGO Acute Kidney Injury (AKI) stage definitions for serum creatinine (sCr) and urine output (UO) used in this work.}
\label{tab:kdigo-stages}
\begin{tabular}{p{1.1cm} p{4cm} p{4cm}}
\toprule
Stage & sCr criteria & UO criteria \\
\midrule

\textbf{Stage 1} 
& \begin{tabular}[t]{@{}l@{}}
    sCr $\geq$ $1.5-1.9 \times$ baseline, or\\
    increase in sCr $\geq 0.3$ mg/dL
  \end{tabular}
& \begin{tabular}[t]{@{}l@{}}
    UO $\le 0.5$ mL/kg/h for 6--12 h
  \end{tabular}
\\[2mm]
\midrule

\textbf{Stage 2} 
& \begin{tabular}[t]{@{}l@{}}
    sCr $\geq$ $2.0-2.9 \times$ baseline
  \end{tabular}
& \begin{tabular}[t]{@{}l@{}}
    UO $\le 0.5$ mL/kg/h for $\geq 12$ h
  \end{tabular}
\\[2mm]
\midrule

\textbf{Stage 3} 
& \begin{tabular}[t]{@{}l@{}}
    sCr $\geq 3.0 \times$ baseline, or \\
    sCr $\geq 4.0$ mg/dL
  \end{tabular}
& \begin{tabular}[t]{@{}l@{}}
    UO $\le 0.3$ mL/kg/h for $\geq 24$ h
  \end{tabular}
\\

\bottomrule
\end{tabular}
\end{table}

\paragraph{Cohort construction and prediction rows.}
We use MIMIC-IV ICU stays and construct one prediction row per eligible ICU stay and timestamp. Prediction rows are kept only when they occur within the first $14$ days after ICU admission and have positive remaining follow-up time before ICU discharge. The two event times are the remaining times from the prediction timestamp to KDIGO ~\citep{noauthor_section_2012}  stage-2 AKI according to the creatinine criterion and according to the urine-output criterion, see Table \ref{tab:kdigo-stages}. The shared censoring time $C$ is the remaining time to ICU discharge. For endpoint $j\in\{1,2\}$,
\begin{equation*}
    Y_j=\min(T_j,C),
    \qquad
    \Delta_j=\mathbb 1\{T_j\le C\},
\end{equation*}
where $T_j$ is the latent time to the corresponding AKI definition. Times are transformed during training as
\begin{equation*}
    t \mapsto \frac{\log(1+t)}{\log(1+q_{0.95})},
\end{equation*}
where $q_{0.95}$ is the $95$th percentile of the training discharge-time distribution. All reported localized energy scores and fixed-horizon metrics are computed after transforming sampled times back to hours.

The split is chronological at the ICU-stay level. Stays from anchor-year groups 2008--2016, plus one-half of the 2017--2019 stays, are used for training; the remaining 2017--2019 stays are used for validation; 2020--2022 stays are held out for testing. This yields $4{,}831{,}810$ training rows, $575{,}500$ validation rows, and $835{,}661$ test rows. The training row-level event rates are $18.3\%$ for creatinine-defined AKI and $39.7\%$ for urine-output-defined AKI; however, note that this is an artificially high value because stays in which the event has already occurred are also counted as events. We do so because urine-defined AKI may already have occurred; however, creatinine-defined AKI may still occur, and we still want to model it. Additionally, one can exit the event at the next time point. During training, the loss function is set to zero for events that have already occurred, since this can be inferred from the observation at that moment. The median remaining discharge time is $54.7$ hours.

\paragraph{Covariates and preprocessing.}
Each prediction row has $22$ standardized covariates. These include gender, age at admission, weight, chronic renal failure status, baseline serum creatinine, elapsed ICU time, and $72$-hour summary features for serum creatinine and urine output: count, mean, minimum, maximum, latest value, and slope. We also add missingness indicators and slope-availability indicators for the two time-varying measurements. Imputation and scaling are fit on the training split only: missing counts are set to zero, missing measurement summaries are filled with training medians, missing slopes are set to zero, and all model features are standardized using training means and standard deviations.

\paragraph{Training objectives and baselines.}
The censored engression model is a noise-conditioned MLP generator with noise dimension $500$, hidden width $500$, depth $2$, and a softplus output transform. It is trained with AdamW, learning rate $2\cdot 10^{-4}$, weight decay $10^{-5}$, batch size $1024$, gradient clipping at $1$, and early stopping on the validation localized censored energy score. Each training update uses $16$ generated samples. A stay-aware minibatch sampler prevents two rows from the same ICU stay from appearing in the same minibatch. The naive engression baseline uses the same generator class but replaces the localized censored objective by an ordinary energy-score objective that treats the observed censored vector $Y$ as if it were the true event-time vector.

We compare against five likelihood-based baselines trained on the same processed tensors: independent Weibull, conditional MLP independent Weibull, Clayton copula Weibull, conditional MLP Clayton copula Weibull, and a joint discrete-grid likelihood model with $32$ bins. The conditional Weibull and conditional Clayton baselines use hidden width $128$ and depth $2$; the joint grid baseline is included only because this AKI task is bivariate.

\paragraph{Evaluation.}
Since the discharge censoring time is observed for every prediction row, we evaluate using localized scores. For the bivariate energy score, generated samples are first localized by
\begin{equation*}
    \psi^\flat_C(z_1,z_2)=(\min(z_1,C),\min(z_2,C)),
\end{equation*}
and are then compared with the observed censored vector $Y$. We report the joint localized energy score for rows with at least one endpoint still at risk, and the marginal localized energy scores after excluding rows in which the corresponding endpoint is already present at prediction time. The test set contributes $802{,}657$ rows to the joint score, $725{,}596$ rows to the creatinine marginal score, and $699{,}383$ rows to the urine-output marginal score.

We also compute localized Brier scores at fixed horizons for the composite endpoint ``either AKI''. For a horizon $\tau$, the censored predicted risk is set to one when $C\le \tau$, and otherwise uses the latent sampled risk, e.g.
\begin{equation*}
    F^{\flat}_{C,\tau}(x)=
    \begin{cases}
    1, & C\le \tau,\\
    \Pr_F(T_1\le \tau \text{ or } T_2\le \tau\mid X=x), & C>\tau.
    \end{cases}
\end{equation*}
The observed censored binary outcome is $\mathbb 1\{Y_1\le\tau \text{ or } Y_2\le\tau\}$. Top-$5\%$ and top-$10\%$ positive predictive value and sensitivity are reported only as secondary ranking summaries on the horizon-specific evaluable subset. They are not valid scores and should not be interpreted as estimates of deployment utility.

\begin{table*}[t]
\centering
\scriptsize
\setlength{\tabcolsep}{3.5pt}
\renewcommand{\arraystretch}{1.05}
\caption{
Secondary utility metrics for the composite endpoint (either AKI). 
Localized Brier is a score, so lower is better; PPV and sensitivity are utility metrics, so higher is better. 
These metrics are descriptive only. 
Bold entries indicate the best method within each horizon and metric.
}
\label{tab:aki-utility-compact}
\begin{adjustbox}{max width=\textwidth}
\begin{tabular}{llccccc}
\toprule
Horizon & Method 
& Loc. Brier $\downarrow$ 
& PPV@5\% $\uparrow$ 
& Sens.@5\% $\uparrow$ 
& PPV@10\% $\uparrow$ 
& Sens.@10\% $\uparrow$ \\
\midrule

6h  & Naive                  & 0.023 & 0.485 & 0.528 & 0.338 & 0.735 \\
6h  & Censored               & \textbf{0.022} & \textbf{0.501} & \textbf{0.545} & \textbf{0.348} & \textbf{0.757} \\
6h  & Joint grid             & 0.028 & 0.465 & 0.505 & 0.341 & 0.742 \\
6h  & Indep. Weibull         & 0.223 & 0.177 & 0.193 & 0.157 & 0.342 \\
6h  & Clayton Weibull        & 0.206 & 0.179 & 0.194 & 0.159 & 0.345 \\
6h  & Cond. indep. Weibull   & 0.041 & 0.403 & 0.439 & 0.313 & 0.680 \\
6h  & Cond. Clayton Weibull  & 0.047 & 0.366 & 0.398 & 0.290 & 0.632 \\

\midrule
12h & Naive                  & 0.045 & 0.727 & 0.316 & 0.615 & 0.535 \\
12h & Censored               & \textbf{0.040} & \textbf{0.782} & \textbf{0.340} & \textbf{0.664} & \textbf{0.578} \\
12h & Joint grid             & 0.043 & 0.728 & 0.317 & 0.622 & 0.542 \\
12h & Indep. Weibull         & 0.216 & 0.367 & 0.160 & 0.327 & 0.284 \\
12h & Clayton Weibull        & 0.201 & 0.369 & 0.160 & 0.330 & 0.287 \\
12h & Cond. indep. Weibull   & 0.051 & 0.655 & 0.285 & 0.560 & 0.488 \\
12h & Cond. Clayton Weibull  & 0.054 & 0.633 & 0.275 & 0.547 & 0.476 \\

\midrule
24h & Naive                  & 0.081 & 0.759 & 0.167 & 0.696 & 0.307 \\
24h & Censored               & \textbf{0.071} & \textbf{0.840} & \textbf{0.185} & \textbf{0.751} & \textbf{0.331} \\
24h & Joint grid             & 0.074 & 0.805 & 0.177 & 0.724 & 0.319 \\
24h & Indep. Weibull         & 0.190 & 0.514 & 0.113 & 0.470 & 0.207 \\
24h & Clayton Weibull        & 0.179 & 0.515 & 0.114 & 0.473 & 0.208 \\
24h & Cond. indep. Weibull   & 0.077 & 0.762 & 0.168 & 0.690 & 0.304 \\
24h & Cond. Clayton Weibull  & 0.078 & 0.757 & 0.167 & 0.689 & 0.304 \\

\midrule
48h & Naive                  & 0.095 & 0.769 & 0.094 & 0.763 & 0.186 \\
48h & Censored               & \textbf{0.080} & \textbf{0.929} & \textbf{0.113} & \textbf{0.882} & \textbf{0.215} \\
48h & Joint grid             & 0.081 & 0.911 & 0.111 & 0.862 & 0.211 \\
48h & Indep. Weibull         & 0.138 & 0.676 & 0.083 & 0.648 & 0.158 \\
48h & Clayton Weibull        & 0.132 & 0.678 & 0.083 & 0.649 & 0.158 \\
48h & Cond. indep. Weibull   & 0.083 & 0.878 & 0.107 & 0.831 & 0.203 \\
48h & Cond. Clayton Weibull  & 0.083 & 0.886 & 0.108 & 0.840 & 0.205 \\

\bottomrule
\end{tabular}
\end{adjustbox}
\end{table*}

\paragraph{Fixed-horizon results.}
Table~\ref{tab:aki-utility-compact} reports the composite endpoint at $6$, $12$, $24$, and $48$ hours. Censored engression obtains the best localized Brier score at all four horizons. The gap is smallest at $6$ hours and becomes more pronounced from $12$ to $48$ hours, where censoring by ICU discharge is more consequential. The secondary enrichment metrics are directionally consistent with the score results: censored engression usually selects a higher-risk top tail than the naive generator and the likelihood baselines. These summaries are useful for clinical interpretation, but the localized Brier score is the primary fixed-horizon criterion.

\begin{table}[t]
\centering
\scriptsize
\setlength{\tabcolsep}{4pt}
\caption{Additional AKI use-case benchmark results using localized energy scores. Lower is better. These scores evaluate overall fit of the predicted event-time distributions.}
\label{tab:aki-es-baselines}
\begin{tabular}{lccc}
\toprule
Method & Joint localized ES & Creatinine localized ES & Urine localized ES \\
\midrule
Naive observed engression & 43.442 & 23.591 & 32.981 \\
Censored engression & \textbf{30.933} & \textbf{15.439} & \textbf{23.769} \\
Independent Weibull & 45.948 & 25.915 & 34.154 \\
Conditional MLP independent Weibull & 35.728 & 19.534 & 26.061 \\
Clayton copula + Weibull & 45.749 & 25.915 & 34.154 \\
Conditional MLP Clayton copula + Weibull & 34.006 & 18.863 & 24.606 \\
Joint discrete-grid likelihood & 31.675 & 16.210 & 24.057 \\
\bottomrule
\end{tabular}
\end{table}

\paragraph{Distributional results.}
Table~\ref{tab:aki-es-baselines} compares methods that can generate full bivariate event-time samples. Censored engression has the lowest joint localized energy score and the lowest marginal localized energy scores (CRPS essentially) for both AKI definitions. The joint discrete-grid likelihood baseline is the closest competitor, which suggests that flexible joint modeling is important for this task; however, it remains slightly worse than censored engression and is tied to the low-dimensional bivariate setting. Conditional Weibull and conditional Clayton baselines substantially improve upon their non-conditional counterparts, confirming the importance of covariate-dependent event-time distributions, but they do not match the sample-based censored engression objective. Overall, the AKI use case supports the same methodological conclusion as the synthetic experiments: treating censored observations as fully observed event times can distort the learned distribution, whereas localizing the score at the observed censoring time yields better predictive distributions based on the observed data.

\paragraph{Interpretational caveats.}
The row-level event rates above are prediction-row rates, not patient-level incidence estimates, because the same ICU stay can contribute multiple prediction times. In addition, ICU discharge is an observed exit process rather than a randomized censoring mechanism. The reported localized scores, therefore, evaluate the predictive distribution under the stated observed-data censoring convention; they should not be interpreted as causal estimates of latent AKI incidence after discharge. The use case is intended to demonstrate the proposed scoring and training methodology, not to define a final clinical deployment model.

%%%%%%%%%%%%%%%%%%%%%%%%%%%%%%%%%%%%%%%%%%%%%%%%%%%%%%%%%%%%

% \clearpage
% \input{checklist.tex}

\end{document}